\renewcommand{\cite}[1]{\citep{#1}}
\newtheorem{theorem}{Theorem}[section]
\newtheorem*{theorem*}{Theorem}
\newtheorem{corollary}{Corollary}[theorem]
\newtheorem*{corollary*}{Corollary}
\newtheorem*{proposition*}{Proposition}
\newtheorem{lemma}[theorem]{Lemma}
\newtheorem{proposition}[theorem]{Proposition}
\newtheorem{remark}[theorem]{Remark}
\theoremstyle{definition}
\newtheorem{definition}{Definition}[section]
\title{Making AI Forget You:\\ Data Deletion in Machine Learning} 
\author[1]{Antonio A. Ginart}
\author[2]{Melody Y. Guan}
\author[2]{Gregory Valiant}
\author[3]{James Zou}
\affil[1]{Dept. of Electrical Engineering}
\affil[2]{Dept. of Computer Science}
\affil[3]{Dept. of Biomedial Data Science}
\affil[ ]{Stanford University, Palo Alto, CA 94305}
\affil[ ]{\texttt{\{tginart, mguan, valiant, jamesz\}@stanford.edu }}
\begin{document}

\maketitle
\vspace{-15pt}
\begin{abstract}
\vspace{-5pt}
 Intense recent discussions have focused on how to provide individuals with control over when their data can and cannot be used --- the EU’s Right To Be Forgotten regulation is an example of this effort. In this paper we initiate a framework studying what to do when it is no longer permissible to deploy models derivative from specific user data. In particular, we formulate the problem of efficiently deleting individual data points from trained machine learning models. For many standard ML models, the only way to completely remove an individual's data is to retrain the whole model from scratch on the remaining data, which is often not computationally practical.  We investigate algorithmic principles that enable efficient data deletion in ML. For the specific setting of $k$-means clustering, we propose two provably efficient deletion algorithms which achieve an average of over $100\times$ improvement in deletion efficiency across 6 datasets, while producing clusters of comparable statistical quality to a canonical $k$-means++ baseline.

\end{abstract}

\vspace{-5pt}
\section{Introduction}
\vspace{-5pt}
Recently, one of the authors received the redacted email below, informing us that an individual's data cannot be used any longer. The UK Biobank 
\cite{sudlow2015uk} is one of the most valuable collections of genetic and medical records with half a million participants. Thousands of machine learning classifiers are trained on this data, and thousands of papers have been published using this data.

\fbox{\begin{minipage}{38em}
\scriptsize
\texttt{EMAIL ---- UK BIOBANK ----}\\ 
\texttt{Subject: UK Biobank Application [REDACTED], Participant Withdrawal Notification [REDACTED]}\\
\\
\texttt{Dear Researcher,}\\

\texttt{As you are aware, participants are free to withdraw form the UK Biobank at any time and request that their data no longer be used.}\texttt{ Since our last review, some participants involved with Application [REDACTED] have requested that their data should longer be used.}
\end{minipage}
}

The email request from the UK Biobank illustrates a fundamental challenge the broad data science and policy community is grappling with: \emph{how should we provide individuals with flexible control over how corporations, governments, and researchers use their data?} Individuals could decide at any time that they do not wish for their personal data to be used for a particular purpose by a particular entity. This ability is sometimes legally enforced. For example, the European Union's General Data Protection Regulation (GDPR) and former Right to Be Forgotten \cite{gdpr,righttobeforgotten} both require that companies and organizations enable users to withdraw consent to their data at any time under certain circumstances. These regulations broadly affect international companies and technology platforms with EU customers and users. Legal scholars have pointed out that the continued use of AI systems directly trained on deleted data could be considered illegal under certain interpretations and ultimately concluded that: \emph{it may be impossible to fulfill the legal aims of the Right to be Forgotten in artificial intelligence environments} \cite{villaronga2018humans}. Furthermore, so-called \emph{model-inversion attacks} have demonstrated the capability of adversaries to extract user information from trained ML models \cite{veale2018algorithms}.

Concretely, we frame the problem of data deletion in machine learning as follows. Suppose a statistical model is trained on $n$ datapoints. For example, the model could be trained to perform disease diagnosis from data collected from $n$ patients. To \emph{delete} the data sampled from the $i$-th patient from our trained model, we would like to update it such that it becomes independent of sample $i$, and looks as if it had been trained on the remaining $n-1$ patients.  A naive approach to satisfy the requested deletion would be to retrain the model from scratch on the data from the remaining $n-1$ patients. For many applications, this is not a tractable solution -- the costs (in time, computation, and energy) for training many machine learning models can be quite high. Large scale algorithms can take weeks to train and consume large amounts of electricity and other resources. Hence, we posit that efficient data deletion is a fundamental data management operation for machine learning models and AI systems, just like in relational databases or other classical data structures.

Beyond supporting individual data rights, there are various other possible use cases in which efficient data deletion is desirable. To name a few examples, it could be used to speed-up leave-one-out-cross-validation \cite{abu2012learning}, support a user data marketplace \cite{schomm2013marketplaces,truong2012data}, or identify important or valuable datapoints within a model \cite{ghorbani2019data}.

Deletion efficiency for general learning algorithms has not been previously studied. While the desired output of a deletion operation on a \emph{deterministic} model is fairly obvious, we have yet to even define data deletion for stochastic learning algorithms. At present, there is only a handful of learning algorithms known to support fast data deletion operations, all of which are deterministic. Even so, there is no pre-existing notion of how engineers should think about the asymptotic \emph{deletion efficiency} of learning systems, nor understanding of the kinds of trade-offs such systems face. 

The key components of this paper include introducing deletion efficient learning, based on an intuitive and operational notion of what it means to (efficiently) delete data from a (possibly stochastic) statistical model. We pose data deletion as an online problem, from which a notion of optimal deletion efficiency emerges from a natural lower bound on amortized computation time. We do a case-study on deletion efficient learning using the simple, yet perennial, $k$-means clustering problem. We propose two deletion efficient algorithms that (in certain regimes) achieve optimal deletion efficiency. Empirically, on six datasets, our methods achieve an average of over $100\times$ speedup in amortized runtime with respect to the canonical Lloyd's algorithm seeded by $k$-means++ \cite{lloyd1982least,arthur2007k}. Simultaneously, our proposed deletion efficient algorithms perform comparably to the canonical algorithm on three different statistical metrics of clustering quality. Finally, we synthesize an algorithmic toolbox for designing  deletion efficient learning systems.

We summarize our work into three contributions:

\textbf{(1)} We formalize the problem and notion of efficient data deletion in the context of machine learning. 

\textbf{(2)} We propose two different deletion efficient solutions for $k$-means clustering that have theoretical guarantees and strong empirical results.

\textbf{(3)} From our theory and experiments, we synthesize four general engineering principles for designing deletion efficient learning systems.

\vspace{-5pt}
 \section{Related Works}
 \vspace{-5pt}

\paragraph{Deterministic Deletion Updates}
As mentioned in the introduction, efficient deletion operations are known for some canonical learning algorithms. They include linear models \cite{statisticalcomputation,regression,van1983matrix,tsai2014incremental,cao2015towards, schelteramnesia}, certain types of \emph{lazy learning} \cite{Webb2010,atkeson1997locally,Birattari:1999:LLM:340534.340673} techniques such as non-parametric Nadaraya-Watson kernel regressions \cite{nadaraya1964estimating} or nearest-neighbors methods \cite{coomans1982alternative,schelteramnesia}, recursive support vector machines \cite{cauwenberghs2001incremental,tsai2014incremental}, and co-occurrence based collaborative filtering \cite{schelteramnesia}. 

\paragraph{Data Deletion and Data Privacy}

Related ideas for protecting data in machine learning --- e.g. cryptography \cite{ohrimenko2016oblivious,bost2015machine,bonawitz2017practical,bogdanov2018implementation,nikolaenko2013privacy,erkin2012generating}, and differential privacy \cite{dwork2014algorithmic,chaudhuri2013near,chaudhuri2011differentially,papernot2016semi,abadi2016deep} --- do not lead to efficient data deletion, but rather attempt to make data private or non-identifiable. Algorithms that support efficient deletion do not have to be private, and algorithms that are private do not have to support efficient deletion. To see the difference between privacy and data deletion, note that every learning algorithm supports the naive data deletion operation of retraining from scratch. The algorithm is not required to satisfy any privacy guarantees. \emph{Even an operation that outputs the entire dataset in the clear could support data deletion, whereas such an operation is certainly not private}. In this sense, the challenge of data deletion only arises in the presence of computational limitations. Privacy, on the other hand, presents statistical challenges, even in the absence of any computational limitations. With that being said, data deletion has direct connections and consequences in data privacy and security, which we explore in more detail in Appendix A.

\vspace{-10pt}
\section{Problem Formulation}
\vspace{-5pt}

We proceed by describing our setting and defining the notion of \emph{data deletion} in the context of a machine learning algorithm and model. Our definition formalizes the intuitive goal that after a specified datapoint, $x$, is deleted, the resulting model is updated to be indistinguishable from a model that was trained from scratch on the dataset sans $x$. Once we have defined data deletion, we define a notion of \emph{deletion efficiency} in the context of an online setting. Finally, we conclude by synthesizing high-level principles for designing deletion efficient learning algorithms.

Throughout we denote dataset $D = \{x_1,\ldots,x_n\}$ as a set consisting of $n$ datapoints, with each datapoint $x_i \in \mathbf{R}^d$; for simplicity, we often represent $D$ as a $n \times d$ real-valued matrix as well. Let $A$ denote a (possibly randomized) algorithm that maps a dataset to a model in hypothesis space $\mathcal{H}$. We allow models to also include arbitrary metadata that is not necessarily used at inference time. Such metadata could include data structures or partial computations that can be leveraged to help with subsequent deletions. We also emphasize that algorithm $A$ operates on datasets of any size. Since $A$ is often stochastic, we can also treat $A$ as implicitly defining a conditional distribution over $\mathcal{H}$ given dataset $D$.

\begin{definition}{\textbf{Data Deletion Operation:}}
  We define a \textit{data deletion} operation for learning algorithm $A$, $R_A(D,A(D),i)$, which maps the dataset $D$, model $A(D)$, and index $i \in \{1,\ldots,n\}$ to some model in  $\mathcal{H}$. Such an operation is a data deletion operation if, for all $D$ and $i$, random variables $A(D_{-i})$ and $R_A(D,A(D),i)$ are equal in distribution, $A(D_{-i}) =_d R_A(D,A(D),i)$.
\end{definition}

Here we focus on exact data deletion: after deleting a training point from the model, the model should be as if this training point had never been seen in the first place. The above definition can naturally be relaxed to approximate data deletion by requiring a bound on the distance (or divergence) between distributions of $A(D_{-i})$ and $R_A(D,A(D),i)$. Refer to Appendix A for more details on approximate data deletion, especially in connection to differential privacy. We defer a full discussion of this to future work.

\paragraph{A Computational Challenge} Every learning algorithm, $A$, supports a trivial data deletion operation corresponding to simply retraining on the new dataset after the specified datapoint has been removed --- namely running algorithm $A$ on the dataset $D_{-i}$.  Because of this, the challenge of data deletion is computational: \textbf{1)} Can we design a learning algorithm $A$, and supporting data structures, so as to allow for a computationally efficient data deletion operation? \textbf{2)} For what algorithms $A$ is there a data deletion operation that runs in time sublinear in the size of the dataset, or at least sublinear in the time it takes to compute the original model, $A(D)$?  \textbf{3)} How do restrictions on the memory-footprint of the metadata contained in $A(D)$ impact the efficiency of 
data deletion algorithms?

\paragraph{Data Deletion as an Online Problem}
One convenient way of concretely formulating the computational challenge of data deletion is via the lens of online algorithms~\cite{bottou1998online}.  Given a dataset of $n$ datapoints, a specific training algorithm $A$, and its corresponding deletion operation $R_A$, one can consider a stream of $m \le n$ distinct indices, $i_1,i_2,\ldots,i_m \in \{1,\ldots,n\}$, corresponding to the sequence of datapoints to be deleted.  The online task then is to design a data deletion operation that is given the indices $\{i_j\}$ one at a time, and must output $A(D_{-\{i_1,\ldots,i_j\}})$ upon being given index $i_j$.  As in the extensive body of work on online algorithms, the goal is to minimize the amortized computation time. The amortized runtime in the proposed online deletion setting is a natural and meaningful way to measure deletion efficiency. A formal definition of our proposed online problem setting can be found in Appendix A.

In online data deletion, a simple lower bound on amortized runtime emerges. All (sequential) learning algorithms $A$ run in time $\Omega(n)$ under the natural assumption that $A$ must process each datapoint at least once. Furthermore, in the best case, $A$ comes with a constant time deletion operation (or a deletion oracle).

\begin{remark}
In the online setting, for $n$ datapoints and $m$ deletion requests we establish an asymptotic lower bound of $\Omega(\frac{n}{m})$  for the amortized computation time of any (sequential) learning algorithm.
\end{remark}

We refer to an algorithm achieving this lower bound as \emph{deletion efficient}. Obtaining tight upper and lower bounds is an open question for many basic learning paradigms including ridge regression, decision tree models, and settings where $A$ corresponds to the solution to a stochastic optimization problem. In this paper, we do a case study on $k$-means clustering, showing that we can achieve deletion efficiency without sacrificing statistical performance.

 \subsection{General Principles for Deletion Efficient Machine Learning Systems}
 
 We identify four design principles which we envision as the pillars of deletion efficient learning algorithms.
 
 \paragraph{Linearity} Use of linear computation allows for simple post-processing to undo the influence of a single datapoint on a set of parameters. Generally speaking, the Sherman-Morrison-Woodbury matrix identity and matrix factorization techniques can be used to derive fast and explicit formulas for updating linear models \cite{statisticalcomputation,regression,van1983matrix,higham2002accuracy}. For example, in the case of linear least squares regressions, QR factorization can be used to delete datapoints from learned weights in time $O(d^2)$ \cite{hammarling2008updating,zeb2017updating}. Linearity should be most effective in domains in which randomized \cite{rahimi2008random}, reservoir \cite{yin2012self,schrauwen2007overview},  domain-specific \cite{lowe1999object}, or pre-trained feature spaces elucidate linear relationships in the data.
 
\paragraph{Laziness} Lazy learning methods delay computation until inference time \cite{Webb2010,Birattari:1999:LLM:340534.340673,atkeson1997locally}, resulting in trivial deletions. One of the simplest examples of lazy learning is $k$-nearest neighbors \cite{friedman2001elements,altman1992introduction,schelteramnesia}, where deleting a point from the dataset at deletion time directly translates to an updated model at inference time. There is a natural affinity between lazy learning and non-parametric techniques \cite{nadaraya1964estimating,bontempi2001local}. Although we did not make use of laziness for unsupervised learning in this work, pre-existing literature on kernel density estimation for clustering would be a natural starting place \cite{hinneburg2007denclue}. Laziness should be most effective in regimes when there are fewer constraints on inference time and model memory than training time or deletion time. In some sense, laziness can be interpreted as shifting computation from training to inference. As a side effect, deletion can be immensely simplified.

\paragraph{Modularity} In the context of deletion efficient learning, modularity is the restriction of dependence of computation state or model parameters to specific partitions of the dataset. Under such a modularization, we can isolate specific modules of data processing that need to be recomputed in order to account for deletions to the dataset. Our notion of modularity is conceptually similar to its use in software design \cite{berman1993optimization} and distributed computing \cite{peleg2000distributed}. In DC-$k$-means, we leverage modularity by managing the dependence between computation and data via the divide-and-conquer tree. Modularity should be most effective in regimes for which the dimension of the data is small compared to the dataset size, allowing for partitions of the dataset to capture the important structure and features.
         
\paragraph{Quantization} Many models come with a sense of continuity from dataset space to model space --- small changes to the dataset should result in small changes to the (distribution over the) model. In statistical and computational learning theory, this idea is known to as \emph{stability} \cite{mukherjee2006learning,kearns1999algorithmic,kutin2002almost,devroye1979distribution,shalev2010learnability,poggio2004general}. We can leverage stability by quantizing the mapping from datasets to models (either explicitly or implicitly). Then, for a small number of deletions, such a quantized model is unlikely to change. If this can be efficiently verified at deletion time, then it can be used for fast average-case deletions. Quantization is most effective in regimes for which the number of parameters is small compared to the dataset size.

\vspace{-5pt}
\section{Deletion Efficient Clustering}
\vspace{-5pt}
Data deletion is a general challenge for machine learning. Due to its simplicity we focus on $k$-means clustering as a case study. Clustering is a widely used ML application, including on the UK Biobank (for example as in \cite{galinsky2016population}). We propose two algorithms for deletion efficient $k$-means clustering. In the context of $k$-means, we treat the output centroids as the model from which we are interested in deleting datapoints. We summarize our proposed algorithms and state theoretical runtime complexity and statistical performance guarantees. Please refer to \cite{friedman2001elements} for background concerning $k$-means clustering.

\subsection{Quantized $k$-Means}

We propose a quantized variant of Lloyd's algorithm as a deletion efficient solution to $k$-means clustering, called Q-$k$-means.  By quantizing the centroids at each iteration, we show that the algorithm's centroids are constant with respect to deletions with high probability. Under this notion of quantized stability, we can support efficient deletion, since most deletions can be resolved without re-computing the centroids from scratch. Our proposed algorithm is distinct from other quantized versions of $k$-means \cite{schellekens2018quantized}, which quantize the data to minimize memory or communication costs. We present an abridged version of the algorithm here (Algorithm 1). Detailed pseudo-code for Q-$k$-means and its deletion operation may be found in Appendix B.

Q-$k$-means follows the iterative protocol as does the canonical Lloyd's algorithm (and makes use of the $k$-means++ initialization). There are four key differences from Lloyd's algorithm. First and foremost, the centroids are quantized in each iteration before updating the partition. The quantization maps each point to the nearest vertex of a uniform $\epsilon$-lattice \cite{gray1998quantization}. To de-bias the quantization, we apply a random phase shift to the lattice. The particulars of the quantization scheme are discussed in Appendix B. Second, at various steps throughout the computation, we \emph{memoize} the optimization state into the model's metadata for use at deletion time (incurring an additional $O(ktd)$ memory cost). Third, we introduce a balance correction step, which compensates for $\gamma$-imbalanced clusters by averaging current centroids with a momentum term based on the previous centroids. Explicitly, for some $\gamma \in (0,1)$, we  consider any partition $\pi_\kappa$ to be $\gamma$-imbalanced if $|\pi_\kappa| \leq \frac{\gamma n}{k}$. We may think of $\gamma$ as being the ratio of the smallest cluster size to the average cluster size. Fourth, because of the quantization, the iterations are no longer guaranteed to decrease the loss, so we have an early termination if the loss increases at any iteration. Note that the algorithm terminates almost surely.

\begin{wrapfigure}{r}{0.5\textwidth}
\vspace{-20pt}
 \begin{minipage}{.99\linewidth}
\begin{algorithm}[H]
   \caption{Quantized $k$-means (abridged)}
   \label{qkmeans_abridged}
\begin{algorithmic}
\footnotesize
   \STATE {\bfseries Input:} data matrix $D \in \mathbf{R}^{n \times d}$
   \STATE {\bfseries Parameters:} $k \in \mathbf{N}$, $T \in \mathbf{N}$, $\gamma \in (0,1)$, $\epsilon > 0$
   \STATE $c \gets k^{++}(D)$ // \textit{initialize centroids with $k$-means++}
   \STATE Save initial centroids:  $\textsf{save}(c)$.
   \STATE $L \gets k$-means loss of initial partition $\pi(c)$
   \FOR{$\tau = 1$ {\bfseries to} $T$}
        \STATE Store current centroids: $ c'\gets c$
        \STATE Compute centroids: $ c \gets c(\pi)$
        \STATE Apply correction to $\gamma$-imbalanced partitions
        \STATE Quantize to random $\epsilon$-lattice: $\hat{c} \gets Q(c;\theta)$
       \STATE Update partition: $\pi' \gets \pi(\hat{c})$
      \STATE Save state to metadata: $\textsf{save}(c,\theta,\hat{c},|\pi'|)$
       \STATE Compute loss $L'$
       \STATE \textbf{if} $L' < L$ \textbf{then} $(c, \pi, L) \gets (\hat{c}, \pi' , L')$ \textbf{else} \textbf{break} 
   \ENDFOR
   \\
   \textbf{return} $c$ //output final centroids as model
\end{algorithmic}
\end{algorithm}
\end{minipage}
\vspace{-10pt}
\end{wrapfigure}

Deletion in Q-$k$-means is straightforward. Using the metadata saved from training time, we can verify if deleting a specific datapoint would have resulted in a different \emph{quantized centroid} than was actually computed during training. If this is the case (or if the point to be deleted is one of randomly chosen initial centroids according to $k$-means++) we must retrain from scratch to satisfy the deletion request. Otherwise, we may satisfy deletion by updating our metadata to reflect the deletion of the specified datapoint, but we do not have to recompute the centroids. Q-$k$-means directly relies the principle of quantization to enable fast deletion in expectation. It is also worth noting that Q-$k$-means also leverages on the principle of linearity to recycle computation. Since centroid computation is linear in the datapoints, it is easy to determine the centroid update due to a removal at deletion time.

\paragraph{Deletion Time Complexity}
We turn our attention to an asymptotic time complexity analysis of Q-$k$-means deletion operation. Q-$k$-means supports deletion by quantizing the centroids, so they are stable to against small perturbations (caused by deletion of a point). 

\begin{theorem}

Let $D$ be a dataset on $[0,1]^d$ of size $n$. Fix parameters $T$, $k$, $\epsilon$, and $\gamma$ for Q-$k$-means. Then, Q-$k$-means supports $m$ deletions in time $O(m^2d^{5/2}/\epsilon)$ in expectation, with probability over the randomness in the quantization phase and $k$-means++ initialization.
\end{theorem}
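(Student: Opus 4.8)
The plan is to separate the amortized deletion cost into two regimes: \emph{cheap} deletions, which only rewrite the memoized sufficient statistics (the running sums and counts defining each centroid) in time $O(Tkd)$, and \emph{expensive} deletions, which trigger a full retraining at cost $C_{\mathrm{train}} = O(Tnkd)$. Since every deletion is cheap unless it either removes a $k$-means++ seed or would have changed at least one quantized centroid $\hat c = Q(c;\theta)$ along the $T$-iteration trajectory, the whole theorem reduces to bounding the expected number of expensive deletions and multiplying by $C_{\mathrm{train}}$. First I would show the expected number of retraining events over the stream is $O\!\big(m^2 d^{3/2}/(\gamma n\epsilon)\big)$ (for fixed $T,k$); combined with $C_{\mathrm{train}}=O(Tnkd)$ the factor $n$ cancels and yields $O(m^2 d^{5/2}/\epsilon)$, while the cheap updates and seed-removals contribute only lower-order $O(m)$ terms.

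The geometric core is a perturbation estimate for a single centroid. Deleting a point $x$ from a cluster $\pi_\kappa$ of size $s$ moves its pre-quantization centroid by exactly $\|c-x\|/(s-1)$; since all data lie in $[0,1]^d$ this is at most $\sqrt d/(s-1)$. The balance-correction step for $\gamma$-imbalanced clusters is exactly what lets me lower-bound $s\ge \gamma n/k$ uniformly, so each deletion perturbs a given centroid by at most $O\!\big(k\sqrt d/(\gamma n)\big)$, and after $j$ deletions the accumulated displacement from the training-time centroid is at most $j$ times this by the triangle inequality. The purpose of quantization is that such a small displacement rarely changes the nearest lattice vertex $\hat c$.

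Next I would convert displacement into a crossing probability using the random phase $\theta$. Because $\theta$ is drawn uniformly over a cell, each centroid's position relative to the lattice is uniform on $[0,\epsilon)^d$, so a coordinate-wise union bound gives that a displacement of Euclidean length $r$ changes $\hat c$ with probability at most $2dr/\epsilon$. Plugging in the accumulated displacement after $j$ deletions and union-bounding over the $T$ iterations, the probability that deletion $j$ forces a retrain is $O\!\big(jTk d^{3/2}/(\gamma n\epsilon)\big)$; summing $j=1,\dots,m$ produces the $m^2$ factor. Multiplying the resulting expected retrain count by $C_{\mathrm{train}}$, and adding the $O(k/n)$-probability seed-removal term (whose expected cost is the lower-order $O(mTk^2 d)$), gives the claimed bound.

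The main obstacle is the probabilistic bookkeeping around accumulation and conditioning, not the geometry. I must argue that the in-cell position stays uniform enough to apply the $2dr/\epsilon$ crossing bound even after conditioning on the history of previous non-retraining deletions, and decide how to account for the resets (and fresh phases) that follow each retraining; the clean cumulative-from-training bound is what yields the slightly loose $m^2$ dependence, whereas a per-step incremental argument would instead require marginal uniformity at every step. Two further delicate points are verifying that the cluster-size floor $s\ge\gamma n/k$ persists as points are removed, so the per-deletion perturbation never blows up, and controlling the seed-removal probability under a worst-case deletion order; I expect these to be where the constants and the precise power of $d$ are pinned down.
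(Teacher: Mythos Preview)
Your proposal is correct and uses exactly the same geometric and probabilistic ingredients as the paper: the $O(k\sqrt d/(\gamma n))$ per-deletion centroid displacement from the $\gamma$-balance floor, the $2dr/\epsilon$ lattice-crossing bound coming from the uniform random phase (the paper packages this as Lemmas C.1--C.2), and a union bound over the $T$ iterations and $k$ centroids. The final arithmetic---cancelling the factor $n$ between $C_{\mathrm{train}}=O(nd)$ and the $1/n$ in the instability probability to obtain $O(m^2 d^{5/2}/\epsilon)$---is identical.

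The one organizational difference is worth noting because it dissolves precisely the obstacle you flag. Rather than bounding the retrain probability at step $j$ and summing, the paper defines a single global event $\Psi=\{$all $m$ deletions leave every quantized centroid unchanged$\}$, proves $\Pr[\overline{\Psi}]\le 2mTkd^{3/2}/(\gamma n\epsilon)$ in one shot (Lemma C.3, comparing $D$ and $D_{-\Delta}$ under the \emph{same} phases and seeds), and then under $\overline{\Psi}$ pessimistically charges every one of the $m$ deletions as a full retrain. This yields the same $m^2$ factor but never conditions on history, so there is no need to argue that the in-cell position remains uniform after earlier non-retraining steps, and the post-retrain phase resets become irrelevant. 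Your per-deletion decomposition can be made rigorous (e.g.\ by bounding $\Pr[\text{retrain at step }j]\le\Pr[\overline{\Psi_j}]$), but the global-event route is strictly less bookkeeping for the same bound.
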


The proof for the theorem is given in Appendix C.  The intuition is as follows. Centroids are computed by taking an average. With enough terms in an average, the effect of a small number of those terms is negligible. The removal of those terms from the average can be interpreted as a small perturbation to the centroid. If that small perturbation is on a scale far below the granularity of the quantizing $\epsilon$-lattice, then it is unlikely to change the quantized value of the centroid. Thus, beyond stability verification, no additional computation is required for a majority of deletion requests.
This result is in expectation with respect to the randomized initializations and randomized quantization phase, but is actually worst-case over all possible (normalized) dataset instances. The number of clusters $k$, iterations $T$, and cluster imbalance ratio $\gamma$ are usually small constants in many applications, and are treated as such here. Interestingly, for constant $m$ and $\epsilon$, the expected deletion time is independent of $n$ due to the stability probability increasing at the same rate as the problem size (see Appendix C). Deletion time for this method may not scale well in the high-dimensional setting. In the low-dimensional case, the most interesting interplay is between $\epsilon$, $n$, and $m$. To obtain as high-quality statistical performance as possible, it would be ideal if $\epsilon \rightarrow 0$ as $n \rightarrow \infty$. In this spirit, we can parameterize $\epsilon = n^{-\beta}$ for $\beta \in (0,1)$. We will use this parameterization for theoretical analysis of the online setting in Section 4.3.

\paragraph{Theoretical Statistical Performance}
We proceed to state a theoretical guarantee on statistical performance of Q-$k$-means, which complements the asymptotic time complexity bound of the deletion operation. Recall that the loss for a $k$-means problem instance is given by the sum of squared Euclidean distance from each datapoint to its nearest centroid. Let $\mathcal{L}^*$ be the optimal loss for a particular problem instance. Achieving the optimal solution is, in general, NP-Hard \cite{aloise2009np}. Instead, we can approximate it with $k$-means++, which achieves $\mathbf{E}\mathcal{L}^{++} \leq (8 \log k + 16)\mathcal{L}^{*}$ \cite{arthur2007k}.

\begin{corollary}
Let $\mathcal{L}$ be a random variable denoting the loss of Q-$k$-means on a particular problem instance of size $n$. Then $\mathbf{E}\mathcal{L} \leq (8 \log k + 16)\mathcal{L}^{*} + \epsilon\sqrt{nd(8 \log k + 16)\mathcal{L}^{*}} + \frac{1}{4}nd\epsilon^2$.
\end{corollary}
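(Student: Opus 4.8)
The plan is to reduce the statistical guarantee to two ingredients: a deterministic Lipschitz-type bound controlling how much the quantization can inflate the $k$-means cost, and the descent property of the algorithm that ties the un-quantized iterate back to the $k$-means++ initialization.

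First I would set up a convenient view of the cost. For any centroid set $C=\{c_1,\dots,c_k\}$ define the distance vector $v(C)\in\mathbf{R}^n$ by $v(C)_i=\min_\kappa\|x_i-c_\kappa\|$, so that $\mathcal{L}(C)=\|v(C)\|_2^2$. The key observation is that if $\hat C=\{\hat c_1,\dots,\hat c_k\}$ is obtained from $C$ by displacing every centroid by at most $\delta$, i.e. $\|\hat c_\kappa-c_\kappa\|\le\delta$ for all $\kappa$, then for each point, choosing the cluster nearest under $C$ and applying the triangle inequality gives $v(\hat C)_i\le v(C)_i+\delta$. Hence $\|v(\hat C)-v(C)\|_\infty\le\delta$, and the triangle inequality in $\mathbf{R}^n$ yields
\[
\sqrt{\mathcal{L}(\hat C)}=\|v(\hat C)\|_2\le\|v(C)\|_2+\delta\sqrt{n}=\sqrt{\mathcal{L}(C)}+\delta\sqrt{n}.
\]
Squaring gives $\mathcal{L}(\hat C)\le\mathcal{L}(C)+2\delta\sqrt{n\,\mathcal{L}(C)}+n\delta^2$. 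Since the quantizer sends each coordinate to the nearest point of an $\epsilon$-spaced lattice, the per-coordinate error is at most $\epsilon/2$ for \emph{every} phase $\theta$, so $\delta=\frac{\epsilon}{2}\sqrt{d}$ holds deterministically; substituting produces the cross term $\epsilon\sqrt{nd\,\mathcal{L}(C)}$ and the additive term $\frac14 nd\epsilon^2$.

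Next I would identify $\hat C$ with the returned (quantized) centroids and $C$ with the un-quantized centroids that were quantized to produce them in the final accepted iteration. The remaining step, which I expect to be the \textbf{main obstacle}, is to show that $\mathcal{L}(C)\le\mathcal{L}^{++}$ on every run, i.e. that the un-quantized iterate never has cost exceeding the initial $k$-means++ partition. This is where the algorithm's structure must be used with care: the balance-correction and quantization steps are not guaranteed to decrease the cost, so monotone descent is not automatic. The argument should lean on the early-termination rule (accept an iteration only when the tracked loss strictly decreases, otherwise break, starting from $L=\mathcal{L}^{++}$) together with the fact that for a fixed partition the cluster means are the cost-minimizing centroids, so the un-quantized Lloyd update can only improve on the previously accepted state. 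Pinning down exactly which cost is tracked by $L'$ in the pseudocode, and verifying that the inequality survives the correction step, is the delicate part.

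Finally I would take expectations over the $k$-means++ randomness and the quantization phase. Because the displacement bound $\delta=\frac{\epsilon}{2}\sqrt{d}$ holds for all $\theta$ and $\mathcal{L}(C)\le\mathcal{L}^{++}$ holds per run, the squared inequality gives
\[
\mathbf{E}\mathcal{L}\le\mathbf{E}\mathcal{L}^{++}+\epsilon\sqrt{nd}\,\mathbf{E}\sqrt{\mathcal{L}^{++}}+\frac{1}{4}nd\epsilon^2.
\]
By Jensen's inequality (concavity of the square root) $\mathbf{E}\sqrt{\mathcal{L}^{++}}\le\sqrt{\mathbf{E}\mathcal{L}^{++}}$, and the $k$-means++ guarantee $\mathbf{E}\mathcal{L}^{++}\le(8\log k+16)\mathcal{L}^*$ bounds both occurrences of $\mathbf{E}\mathcal{L}^{++}$. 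Substituting yields exactly $\mathbf{E}\mathcal{L}\le(8\log k+16)\mathcal{L}^*+\epsilon\sqrt{nd(8\log k+16)\mathcal{L}^*}+\frac14 nd\epsilon^2$, as claimed; note that stochasticity enters only through these two expectation steps, while the perturbation bound is worst-case over the data and over $\theta$.
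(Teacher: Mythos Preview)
Your approach is essentially the same as the paper's: bound the effect of an $\epsilon$-quantization on the $k$-means cost, then pass to expectations via Jensen and the $k$-means++ guarantee of Arthur--Vassilvitskii. The paper derives the identical pointwise inequality $\mathcal{L}\le\mathcal{L}^{++}+\epsilon\sqrt{nd\,\mathcal{L}^{++}}+\tfrac14 nd\epsilon^2$ by a coordinate-wise expansion (writing $\|c(i)-x_i\|^2=\sum_j\delta_{ij}^2$ and bounding the cross term $\epsilon\sum_{ij}\delta_{ij}\le\epsilon\sqrt{nd\,\mathcal{L}^{++}}$ via Cauchy--Schwarz) rather than your $\ell_2$ triangle inequality on the distance vector $v(C)$; the two computations are equivalent.

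The one substantive difference is that the paper applies the perturbation bound with $C$ equal to the $k$-means++ \emph{initialization} itself, not the un-quantized centroids of the final accepted iteration. This completely sidesteps what you flag as the ``main obstacle.'' The early-termination rule already guarantees $\mathcal{L}\le\mathcal{L}^{++}$ on every run: the tracked loss $L$ is initialized to $\mathcal{L}^{++}$, only strictly decreasing updates are accepted, and if the very first quantized iterate fails the test, the un-quantized $k$-means++ seeds are returned with loss exactly $\mathcal{L}^{++}$. Once $\mathcal{L}\le\mathcal{L}^{++}$ is in hand, the displayed inequality follows immediately (indeed trivially, since the two extra terms are non-negative), and there is no need to control the un-quantized intermediate $C$, reason about the balance-correction step, or pin down what $L'$ tracks. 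So your detour through ``$\mathcal{L}(C)\le\mathcal{L}^{++}$'' is avoidable: take $C$ to be the $k$-means++ seeds and the difficulty evaporates.
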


This corollary follows from the theoretical guarantees already known to apply to Lloyd's algorithm when initialized with $k$-means++, given by \cite{arthur2007k}. The proof can be found in Appendix C. We can interpret the bound by looking at the ratio of expected loss upper bounds for $k$-means++ and Q-$k$-means. If we assume our problem instance is generated by iid samples from some arbitrary non-atomic distribution, then it follows that $\mathcal{L}^* = O(n)$. Taking the loss ratio of upper bounds yields $\mathbf{E}\mathcal{L}/\mathbf{E}\mathcal{L}^{++} \leq 1 + O(d\epsilon^2 + \sqrt{d}\epsilon)$. Ensuring that $\epsilon << 1/\sqrt{d}$ implies the upper bound is as good as that of $k$-means++.

\subsection{Divide-and-Conquer \(k\)-Means}

\begin{wrapfigure}{r}{0.5\textwidth}
\vspace{-25pt}
 \begin{minipage}{.99\linewidth}
\begin{algorithm}[H]

   \caption{DC-$k$-means}    
   \label{dckmeans}
\begin{algorithmic}
\footnotesize
   \STATE {\bfseries Input:} data matrix $D \in \mathbf{R}^{n \times d}$
   \STATE {\bfseries Parameters:} $k \in \mathbf{N}$, $T \in \mathbf{N}$, tree width $w \in \mathbf{N}$, tree height $h \in \mathbf{N}$
   \STATE Initialize a $w$-ary tree of height $h$ such that each \textsf{node} has a pointer to a \textsf{dataset} and \textsf{centroids}
   \FOR{$i= 1$ {\bfseries to} $n$}
        \STATE Select a leaf $\textsf{node}$ uniformly at random
        \STATE $\textsf{node}.\textsf{dataset}.\textsf{add}(D_i)$
        \ENDFOR
   \FOR{$l = h$ {\bfseries down to} $0$}
        \FOR{ {\bfseries each} \textsf{node} in level $l$}
            \STATE $c \gets \textsf{k-means++}(\textsf{node}.\textsf{dataset},k,T)$
            \STATE $\textsf{node}.\textsf{centroids} \gets c$
            \IF{$ l > 0$}
            \STATE $\textsf{node}.\textsf{parent}.\textsf{dataset}.\textsf{add}(c)$
            \ELSE
            \STATE \textsf{save} all \textsf{nodes} as metadata
            \STATE \textbf{return} $c$ //model output
            \ENDIF
         \ENDFOR
        \ENDFOR
\end{algorithmic}
\end{algorithm}
\end{minipage}
\vspace{-15pt}

\end{wrapfigure}

We turn our attention to another variant of Lloyd's algorithm that also supports efficient deletion, albeit through quite different means. We refer to this algorithm as Divide-and-Conquer $k$-means (DC-$k$-means). At a high-level, DC-$k$-means works by partitioning the dataset into small sub-problems, solving each sub-problem as an independent $k$-means instance, and recursively merging the results. We present pseudo-code for DC-$k$-means here, and we refer the reader to Appendix B for pseudo-code of the deletion operation.

DC-$k$-means operates on a perfect $w$-ary tree of height $h$ (this could be relaxed to any rooted tree). The original dataset is \emph{partitioned} into each leaf in the tree as a uniform multinomial random variable with datapoints as trials and leaves as outcomes. At each of these leaves, we solve for some number of centroids via $k$-means++. When we merge leaves into their parent node, we construct a new dataset consisting of all the centroids from each leaf. Then, we compute new centroids at the parent via another instance of $k$-means++. For simplicity, we keep $k$ fixed throughout all of the sub-problems in the tree, but this could be relaxed. We make use of the tree hierarchy to \emph{modularize} the computation's dependence on the data. At deletion time, we need only to recompute the sub-problems from \emph{one} leaf up to the root. This observation allows us to support fast deletion operations.

Our method has close similarities to pre-existing distributed $k$-means algorithms \cite{qin2016distributed,peleg2000distributed,balcan2013distributed,bachem2017distributed,guha1998cure,bahmani2012scalable,zhao2009parallel}, but is in fact distinct (not only in that it is modified for deletion, but also in that it operates over general rooted trees). For simplicity, we restrict our discussion to only the simplest of divide-and-conquer trees. We focus on depth-1 trees with $w$ leaves where each leaf solves for $k$ centroids. This requires only one merge step with a root problem size of $kn/w$.

Analogous to how $\epsilon$ serves as a knob to trade-off between deletion efficiency and statistical performance in Q-$k$-means, for DC-$k$-means, we imagine that $w$ might also serve as a similar knob. For example, if $w = 1$, DC-$k$-means degenerates into canonical Lloyd's (as does Q-$k$-means as $\epsilon \rightarrow 0$). The dependence of statistical performance on tree width $w$ is less theoretically tractable than that of Q-$k$-means on $\epsilon$, but in Appendix D, we empirically show that statistical performance tends to decrease as $w$ increases, which is perhaps somewhat expected.

As we show in our experiments, depth-1 DC-$k$-means demonstrates an empirically compelling trade-off between deletion time and statistical performance. There are various other potential extensions of this algorithm, such as weighting centroids based on cluster mass as they propagate up the tree or exploring the statistical performance of deeper trees.

\paragraph{Deletion Time Complexity}
For ensuing asymptotic analysis, we may consider parameterizing tree width $w$ as $w = \Theta(n^\rho)$ for $\rho \in (0,1)$. As before, we treat $k$ and $T$ as small constants. Although intuitive, there are some technical minutia to account for to prove correctness and runtime for the DC-$k$-means deletion operation. The proof of Proposition 3.2 may be found in Appendix C. 

\begin{proposition}
Let $D$ be a dataset on $\mathbf{R}^d$ of size $n$. Fix parameters $T$ and $k$ for DC-$k$-means. Let $w = \Theta(n^\rho)$ and $\rho \in (0,1)$  Then, with a depth-1, $w$-ary divide-and-conquer tree, DC-$k$-means supports $m$ deletions in time $O(m\mathbf{max}\{n^{\rho},n^{1-\rho}\}d)$ in expectation with probability over the randomness in dataset partitioning.
\end{proposition}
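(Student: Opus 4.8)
The plan is to establish the two claims that are bundled into the statement: that the described update is a genuine data deletion operation in the sense of the Definition (its full state is equal in distribution to a fresh run of DC-$k$-means on the reduced dataset), and that its expected cost over the random partition obeys the stated bound. The structural observation driving everything is that, for a depth-$1$, $w$-ary tree, deleting a point $x_i$ changes the dataset of only the single leaf $\ell(i)$ to which $i$ was assigned; every other leaf, and hence its stored centroids, is untouched. The deletion operation therefore re-runs $k$-means++ on the reduced leaf $\ell(i)$, overwrites that leaf's $k$ stored centroids, and then re-runs $k$-means++ at the root on the centroids collected from the $w$ leaves. First I would prove correctness for a single deletion, then lift it to $m$ deletions by induction, and finally bound the runtime.

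For the single-deletion correctness I would condition on the final leaf partition $P$ of $D_{-i}$. Under a fresh run $A(D_{-i})$, the partition is an i.i.d.\ uniform multinomial and, given $P$, the leaf centroids are mutually independent $k$-means++ outputs on the respective leaf datasets, followed by a $k$-means++ root computation. Under the deletion operation, the partition of $D$ restricted to $D_{-i}$ is again i.i.d.\ uniform on $D_{-i}$, since deleting one trial of a uniform multinomial leaves the remaining assignments i.i.d.\ uniform; conditioned on $P$, the unaffected leaves carry their \emph{original} $k$-means++ centroids (which, because those leaf datasets are unchanged, are distributed exactly as fresh $k$-means++ outputs), while the affected leaf and the root are recomputed with fresh randomness. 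As the per-leaf computations are independent across leaves, reusing the untouched ones is distributionally identical to recomputing them, so the conditional law of (all leaf centroids, root centroids) given $P$ matches and the marginal law of $P$ matches; hence the two full states agree in distribution. Since the operation also restores the metadata (partition counts and stored centroids) to a state consistent with a fresh run, an induction over the stream $i_1,\ldots,i_m$, applying the single-deletion equivalence at each step, shows the final state is distributed as $A(D_{-\{i_1,\ldots,i_m\}})$.

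For the runtime I would bound the per-deletion work. Running $k$-means++ with $T$ Lloyd iterations on a leaf holding $s$ points costs $O(kTsd)$, and the root problem has $O(w)$ points (each of the $w$ leaves contributes $k=O(1)$ centroids), costing $O(wd)$; treating $k$ and $T$ as constants the leaf cost is $O(sd)$. The size of leaf $\ell(i_j)$ at deletion time is at most its original size $N_{\ell(i_j)} = \sum_{i} \mathbf{1}[\ell(i)=\ell(i_j)]$, and since the assignments are i.i.d.\ uniform and the deletion sequence is fixed independently of the partition, $\mathbf{E}[N_{\ell(i_j)}] = 1 + (n-1)/w = O(n/w)$ for every $j$. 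By linearity of expectation the total expected leaf-recomputation cost over the $m$ deletions is $O(m(n/w)d)$, and the root cost is a deterministic $O(mwd)$. Summing gives $O\big(md(n/w + w)\big) = O\big(md\max\{n/w,\,w\}\big)$, and substituting $w = \Theta(n^\rho)$ yields $O\big(md\max\{n^{1-\rho}, n^{\rho}\}\big)$, as claimed.

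I expect the main obstacle to be the correctness argument rather than the runtime accounting: the delicate point is justifying that physically reusing the previously computed centroids of the untouched leaves, rather than resampling them, produces exactly the conditional distribution of a fresh run, which rests on the independence of the per-leaf $k$-means++ calls together with the memorylessness of the uniform multinomial under point removal. Two edge cases also need care: leaves that (after deletions) hold fewer than $k$ points, which requires fixing a shared $k$-means++ convention so the fresh-run and deletion paths agree, and ensuring the online maintenance of metadata keeps every intermediate state a faithful sample so the induction does not break. The runtime bound, by contrast, reduces to the elementary multinomial occupancy computation above, with the stated expectation being exactly over the oblivious, partition-independent deletion stream.
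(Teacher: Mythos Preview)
Your runtime argument is correct and in fact cleaner than the paper's. The paper models the leaf of a uniformly-chosen deletion as a size-biased draw and computes $\mathbf{E}[J]=\sum_\ell n\,\mathbf{E}[\hat S_\ell^2]$ via the second moment of a Binomial; you instead observe that for an index chosen independently of the partition, $N_{\ell(i)}-1\sim\mathrm{Binomial}(n-1,1/w)$, so $\mathbf{E}[N_{\ell(i)}]=1+(n-1)/w$ directly. Both routes land at $O(n^{1-\rho})$ for the leaf cost and $O(n^{\rho})$ for the root, so the final bound matches.

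On correctness your conditional-law argument for why reusing the untouched leaves' centroids is distributionally identical to recomputing them is more explicit than the paper's, and your handling of the multinomial memorylessness and the induction over the stream is sound. There is, however, one point the paper flags that you do not address: because the width is set as $w=\Theta(n^\rho)$, the parameter $w$ is itself a function of the current dataset size, so a \emph{fresh} run $A(D_{-i})$ would in principle instantiate a tree with $\lceil (n-1)^\rho\rceil$ leaves rather than $\lceil n^\rho\rceil$. If those differ, your ``restrict the partition to $D_{-i}$ and keep the same tree'' move no longer matches the law of $A(D_{-i})$, and the distributional equality fails. The paper's fix is to round $w$ to the nearest power of two (so $w$ is within a factor $2$ of $n^\rho$ and changes only at rare thresholds), and to fall back to a full retrain when a deletion crosses such a threshold; since this happens only $O(1)$ times in the fractional-power regime, the amortized runtime is unaffected. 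Your proof would be complete once you add this caveat.
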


\subsection{Amortized Runtime Complexity in Online Deletion Setting}

We state the amortized computation time for both of our algorithms in the online deletion setting defined in Section 3. We are in an asymptotic regime where the number of deletions $m = \Theta( n^\alpha)$ for $0 < \alpha < 1$ (see Appendix C for more details). Recall the $\Omega(\frac{n}{m})$ lower bound from Section 3. For a particular fractional power $\alpha$, an algorithm achieving the optimal asymptotic lower bound on amortized computation is said to be \emph{$\alpha$-deletion efficient}. This corresponds to achieving an amortized runtime of $O(n^{1-\alpha})$.  The following corollaries result from direct calculations which may be found in Appendix C. Note that Corollary 4.2.2 assumes DC-$k$-means is training sequentially. 

\begin{corollary}

With $\epsilon = \Theta(n^{-\beta}),$ for $0 < \beta < 1$, the Q-$k$-means algorithm is $\alpha$-deletion efficient in expectation if $\alpha \leq \frac{1-\beta}{2}$.

\end{corollary}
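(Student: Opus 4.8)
The plan is to combine the expected total deletion time from Theorem 3.1 with the one-time training cost, place both on the same amortized scale, and then optimize over the stated parameterizations. Recall that in the online deletion setting the amortized computation time charges the learner for the single training pass plus all subsequent deletions, divided by the number of deletion requests $m$; this is exactly the quantity against which the $\Omega(n/m)$ lower bound of Section 3 is stated, so charging training to the amortized budget is essential.

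First I would bound the training cost. Q-$k$-means is $k$-means++ seeding followed by at most $T$ quantized Lloyd iterations, each of which computes $n$ point-to-centroid assignments in $d$ dimensions, together with lower-order quantization and memoization steps (the latter costing only $O(Tkd)$ in total). Treating $k$, $T$, $d$, and $\gamma$ as constants, as elsewhere in the paper, the training cost is $O(n)$.

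Next I would invoke Theorem 3.1, which gives the expected time to serve all $m$ deletions as $O(m^2 d^{5/2}/\epsilon)$. Adding the training cost and dividing by $m$ yields an expected amortized time of
\[
O\!\left(\frac{n}{m} + \frac{m\, d^{5/2}}{\epsilon}\right).
\]
Substituting $m = \Theta(n^{\alpha})$ and $\epsilon = \Theta(n^{-\beta})$, and absorbing the constant $d^{5/2}$, turns this into $O\!\left(n^{1-\alpha} + n^{\alpha+\beta}\right)$. The first term is exactly the target $O(n^{1-\alpha})$ of $\alpha$-deletion efficiency and comes entirely from amortizing the training pass, matching the lower bound. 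The condition for the deletion term not to dominate is $\alpha + \beta \le 1 - \alpha$, i.e. $\alpha \le \frac{1-\beta}{2}$, which is precisely the claimed regime.

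The calculation itself is short, so the step I would be most careful about is the accounting convention rather than any analytic difficulty. Two points matter: one must use the \emph{total} expected deletion cost $O(m^2 d^{5/2}/\epsilon)$ from Theorem 3.1 rather than a naive per-deletion bound, since that result already folds in the occasional full retrain triggered when a deletion would cross a lattice boundary and alter a quantized centroid (the $n$ from each $O(n)$ retrain cancels against the $1/n$ decay of the retrain probability, which is why the deletion term is $n$-free); and one must retain the training term so that the $\Omega(n/m)$ lower bound is non-vacuous. Once these two ingredients sit on the same amortized scale, balancing the two resulting powers of $n$ gives the stated threshold on $\alpha$ directly.
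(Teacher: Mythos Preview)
Your proposal is correct and follows essentially the same approach as the paper: combine the $O(n)$ training cost with the $O(m^2 d^{5/2}/\epsilon)$ expected total deletion time from the theorem, amortize by $m$, substitute $m=\Theta(n^{\alpha})$ and $\epsilon=\Theta(n^{-\beta})$ to get $O(n^{1-\alpha}+n^{\alpha+\beta})$, and read off the condition $\alpha+\beta\le 1-\alpha$. Your added commentary on why the training term must be retained and why the total (rather than per-deletion) cost is the right input is accurate and helpful, but the underlying argument is the same short calculation the paper gives.
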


\begin{corollary}

With $w = \Theta(n^{\rho})$, for $0 < \rho < 1$, and a depth-1 $w$-ary divide-and-conquer tree, DC-$k$-means is $\alpha$-deletion efficient in expectation if $\alpha < 1- \textbf{max}\{1-\rho, \rho\}.$
\end{corollary}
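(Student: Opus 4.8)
The plan is to bound the total computation expended over the online deletion sequence --- the initial training of DC-$k$-means plus the cost of all $m$ deletions --- and then divide by $m$ to obtain the amortized runtime, which I compare against the target $O(n^{1-\alpha})$ implied by $\alpha$-deletion efficiency. Recall from Section 3 that the relevant lower bound is $\Omega(n/m) = \Omega(n^{1-\alpha})$, so matching $O(n^{1-\alpha})$ is exactly optimal.

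First I would bound the training cost. With a depth-1 tree of width $w = \Theta(n^\rho)$, the $n$ datapoints are partitioned across the $w$ leaves, so each leaf holds $\Theta(n/w) = \Theta(n^{1-\rho})$ points in expectation. Running $k$-means++ with $T$ iterations and fixed $k$ on one leaf costs $O(n^{1-\rho} d)$ up to the constants $k, T$, and summing over all $w$ leaves gives $O(w \cdot n^{1-\rho} d) = O(nd)$. The single root problem operates on the $wk = \Theta(n^\rho)$ merged centroids at cost $O(n^\rho d)$, which is dominated by the leaf cost. Hence the sequential training time is $O(nd)$, consistent with the $\Omega(n)$ floor used in the lower bound.

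Next I would invoke Proposition 3.2, which already establishes that $m$ deletions take $O(m \max\{n^\rho, n^{1-\rho}\} d)$ time in expectation. Adding training and deletion costs, the total expected computation is $O(nd) + O(m \max\{n^\rho, n^{1-\rho}\} d)$. Dividing by $m = \Theta(n^\alpha)$ yields an amortized runtime of $O(n^{1-\alpha} d) + O(\max\{n^\rho, n^{1-\rho}\} d)$. The first term is exactly the target order, so the method is $\alpha$-deletion efficient precisely when the second term does not dominate, i.e. when $\max\{\rho, 1-\rho\} \le 1-\alpha$, which rearranges to $\alpha \le 1 - \max\{1-\rho, \rho\}$; taking the strict inequality as in the statement guarantees the deletion term is of strictly lower order and the amortized cost is $O(n^{1-\alpha})$.

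I expect the only real subtlety to be the training-time accounting --- specifically verifying that the leaf subproblems, not the root merge, dominate, and confirming that the per-deletion work from Proposition 3.2 is amortized rather than double-counted. The assumption that training is sequential matters here: it ensures the full $O(nd)$ is charged to the amortized total, whereas a parallel schedule would lower wall-clock training but not total computation. This is what keeps $\Omega(n/m)$ the right benchmark and makes the match tight.
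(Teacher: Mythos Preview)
Your proposal is correct and follows essentially the same approach as the paper: both arguments take training cost $O(n)$, invoke Proposition~4.2 for the $O(m\max\{n^\rho,n^{1-\rho}\})$ deletion cost, amortize by $m=\Theta(n^\alpha)$, and rearrange $\max\{\rho,1-\rho\}\le 1-\alpha$. Your version is slightly more explicit in deriving the $O(n)$ training bound by summing leaf and root subproblem costs, but the structure is identical.
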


\vspace{-5pt}
\section{Experiments}
\vspace{-5pt}
With a theoretical understanding in hand, we seek to empirically characterize the trade-off between runtime and performance for the proposed algorithms. In this section, we provide proof-of-concept for our algorithms by benchmarking their amortized runtimes and clustering quality on a simulated stream of online deletion requests. As a baseline, we use the canonical Lloyd's algorithm initialized by $k$-means++ seeding \cite{lloyd1982least,arthur2007k}. Following the broader literature, we refer to this baseline simply as $k$-means, and refer to our two proposed methods as Q-$k$-means and DC-$k$-means.

\paragraph{Datasets} We run our experiments on five real, publicly available datasets: \texttt{Celltype} ($N=12,009$, $D=10$, $K=4$)  \cite{han2018mapping}, \texttt{Covtype} ($N=15,120$, $D=52$, $K=7$) \cite{blackard1999comparative}, \texttt{MNIST} ($N=60,000$, $D=784$, $K=10$) \cite{lecun1998gradient},  \texttt{Postures} ($N=74,975$, $D=15$, $K=5$) \cite{gardner2014measuring,gardner20143d} , \texttt{Botnet} ($N=1,018,298$, $D=115$, $K=11$)\cite{meidan2018n}, and a synthetic dataset made from a Gaussian mixture model which we call \texttt{Gaussian} ($N=100,000$, $D=25$, $K=5$). We refer the reader to Appendix D for more details on the datasets. All datasets come with ground-truth labels as well. Although we do not make use of the labels at learning time, we can use them to evaluate the statistical quality of the clustering methods.

\paragraph{Online Deletion Benchmark} We simulate a stream of 1,000 deletion requests, selected uniformly at random and without replacement. An algorithm trains once, on the full dataset, and then runs its deletion operation to satisfy each request in the stream, producing an intermediate model at each request. For the canonical $k$-means baseline, deletions are satisfied by re-training from scratch. 

\paragraph{Protocol} To measure statistical performance, we evaluate with three metrics (see Section 5.1) that measure cluster quality. To measure deletion efficiency, we measure the wall-clock time to complete our online deletion benchmark. For both of our proposed algorithms, we always fix 10 iterations of Lloyd's, and all other parameters are selected with simple but effective heuristics (see Appendix D). This alleviates the need to tune them. To set a fair $k$-means baseline, when reporting runtime on the online deletion benchmark, we also fix 10 iterations of Lloyd's, but when reporting statistical performance metrics, we run until convergence. We run five replicates for each method on each dataset and include standard deviations with all our results. We refer the reader to Appendix D for more experimental details.

\subsection{Statistical Performance Metrics}
To evaluate clustering performance of our algorithms, the most obvious metric is the optimization loss of the $k$-means objective. Recall that this is the sum of square Euclidean distances from each datapoint to its nearest centroid. To thoroughly validate the statistical performance of our proposed algorithms, we additionally include two canonical clustering performance metrics.

\textbf{Silhouette Coefficient }\cite{rousseeuw1987silhouettes}: This coefficient measures a type of correlation (between -1 and +1) that captures how dense each cluster is and how well-separated different clusters are. The silhouette coefficient is computed without ground-truth labels, and uses only spatial information. Higher scores indicate denser, more well-separated clusters.

\textbf{Normalized Mutual Information (NMI)} \cite{vinh2010information,knops2006normalized}: This quantity measures the agreement of the assigned clusters to the ground-truth labels, up to permutation. NMI is upper bounded by 1, achieved by perfect assignments. Higher scores indicate better agreement between clusters and ground-truth labels.

\vspace{-5pt}
\subsection{Summary of Results}
\vspace{-5pt}

\begin{table}[h]
\begin{minipage}{0.3\textwidth}
\vspace{-5pt}
We summarize our key findings in four tables. In Tables 1-3, we report the statistical clustering performance of the 3 algorithms on each of the 6 datasets. In Table 1, we report the optimization loss ratios of our proposed methods over the $k$-means++ baseline. 
\end{minipage}\hfill 
\begin{minipage}{0.7\textwidth}
\centering
\small
\caption{Loss Ratio}
\begin{tabular}{l|c|c|c}
\toprule
 \bf Dataset  & \bf $k$-means \normalfont & \bf Q-$k$-means &\bf DC-$k$-means \normalfont \\
 \midrule
\bf Celltype & $1.0 \pm 0.0$ & $1.158 \pm 0.099$ & $1.439 \pm 0.157$ \\
\bf Covtype & $1.0 \pm 0.029$ & $1.033 \pm 0.017$ & $1.017 \pm 0.031$ \\
\bf MNIST & $1.0 \pm 0.002$ & $1.11 \pm 0.004$ & $1.014 \pm 0.003$ \\
\bf Postures & $1.0 \pm 0.004$ & $1.014 \pm 0.015$ & $1.034 \pm 0.017$ \\
\bf Gaussian & $1.0 \pm 0.014$ & $1.019 \pm 0.019$ & $1.003 \pm 0.014$ \\
\bf Botnet & $1.0 \pm 0.126$ & $1.018 \pm 0.014$ & $1.118 \pm 0.102$ \\
\bottomrule
\end{tabular}

\
\end{minipage}
\vspace{-5pt}
\end{table}

\begin{table}[h]
\begin{minipage}{0.3\textwidth}
\vspace{-5pt}

In Table 2, we report the silhouette coefficient for the clusters. In Table 3, we report the NMI. In Table 4, we report the amortized total runtime of training and deletion for each method. \textbf{Overall, we see that the statistical clustering performance of the three methods are competitive.} 

\end{minipage}\hfill 
\begin{minipage}{0.7\textwidth}
\centering
\small
\caption{Silhouette Coefficients (higher is better)}
\begin{tabular}{l|c|c|c}
\toprule
\bf Dataset  & \bf $k$-means \normalfont & \bf Q-$k$-means &\bf DC-$k$-means \normalfont \\
 \midrule
\bf Celltype & $0.384 \pm 0.001$ & $0.367 \pm 0.048$ & $0.422 \pm 0.057$ \\
\bf Covtype & $0.238 \pm 0.027$ & $0.203 \pm 0.026$ & $0.222 \pm 0.017$ \\
\bf Gaussian & $0.036 \pm 0.002$ & $0.031 \pm 0.002$ & $0.035 \pm 0.001$ \\
\bf Postures & $0.107 \pm 0.003$ & $0.107 \pm 0.004$ & $0.109 \pm 0.005$ \\
\bf Gaussian & $0.066 \pm 0.007$ & $0.053 \pm 0.003$ & $0.071 \pm 0.004$ \\
\bf Botnet & $0.583 \pm 0.042$ & $0.639 \pm 0.028$ & $0.627 \pm 0.046$ \\
\bottomrule
\end{tabular}
\label{table:silhouette}
\end{minipage}
\vspace{-5pt}
\end{table}

\begin{table}[h]
\begin{minipage}{0.3\textwidth}
\vspace{-5pt}
\textbf{Furthermore, we find that both proposed algorithms yield orders of magnitude of speedup.} As expected from the theoretical analysis, Q-$k$-means offers greater speed-ups when the dimension is lower relative to the sample size, whereas DC-$k$-means is more consistent across dimensionalities.

\end{minipage}\hfill 
\begin{minipage}{0.7\textwidth}

\centering
\small
\caption{Normalized Mutual Information (higher is better)}
\begin{tabular}{l|c|c|c}
\toprule
\bf Dataset  & \bf $k$-means \normalfont & \bf Q-$k$-means &\bf DC-$k$-means \normalfont \\
 
 \midrule
\bf Celltype& $0.36 \pm 0.0$ & $0.336 \pm 0.032$ & $0.294 \pm 0.067$ \\
\bf Covtype & $0.311 \pm 0.009$ & $0.332 \pm 0.024$ & $0.335 \pm 0.02$ \\
\bf MNIST & $0.494 \pm 0.006$ & $0.459 \pm 0.011$ & $0.494 \pm 0.004$ \\
\bf Gaussian & $0.319 \pm 0.024$ & $0.245 \pm 0.024$ & $0.318 \pm 0.024$ \\
\bf Postures & $0.163 \pm 0.018$ & $0.169 \pm 0.012$ & $0.173 \pm 0.011$ \\
\bf Botnet & $0.708 \pm 0.048$ & $0.73 \pm 0.015$ & $0.705 \pm 0.039$ \\
\bottomrule
\end{tabular}
\label{table:nmi}
\end{minipage}
\end{table}

\begin{table*}[h!]
\vspace{-15pt}
\centering
\small
\caption{Amortized Runtime in Online Deletion Benchmark (Train once + 1,000 Deletions) }
\small
\begin{tabular}{l|c|cc|cc}
\toprule
  
 & \bf $k$-means \normalfont  & \multicolumn{2}{c|}{\bf Q-$k$-means \normalfont} &
    \multicolumn{2}{c}{\bf  DC-$k$-means \normalfont} \\
\bf Dataset  & \bf Runtime (s) \normalfont & \bf Runtime (s) \normalfont  & \bf Speedup \normalfont & \bf Runtime (s) & \bf Speedup\\
\midrule
\bf Celltype & $4.241 \pm 0.248$ & $0.026 \pm 0.011$ & $163.286\times$ & $0.272 \pm 0.007$ & $15.6\times$ \\
\bf Covtype & $6.114 \pm 0.216$ & $0.454 \pm 0.276$ & $13.464\times$ & $0.469 \pm 0.021$ & $13.048\times$\\
\bf MNIST & $65.038 \pm 1.528$ & $29.386 \pm 0.728$ & $2.213\times$ & $2.562 \pm 0.056$ & $25.381\times$ \\
\bf Postures & $26.616 \pm 1.222$ & $0.413 \pm 0.305$ & $64.441\times$ & $1.17 \pm 0.398$ & $22.757\times$ \\
\bf Gaussian & $206.631 \pm 67.285$ & $0.393 \pm 0.104$ & $525.63\times$ & $5.992 \pm 0.269$ & $34.483\times$ \\
\bf Botnet & $607.784 \pm 64.687$ & $1.04 \pm 0.368$ & $584.416\times$ & $8.568 \pm 0.652$ & $70.939\times$ \\
\bottomrule
\end{tabular}
\label{table:speedup}
\vspace{-8pt}
\end{table*}

\begin{figure}[h!]
\vspace{-12pt}
\centering
\small
\includegraphics[width=0.95\textwidth]{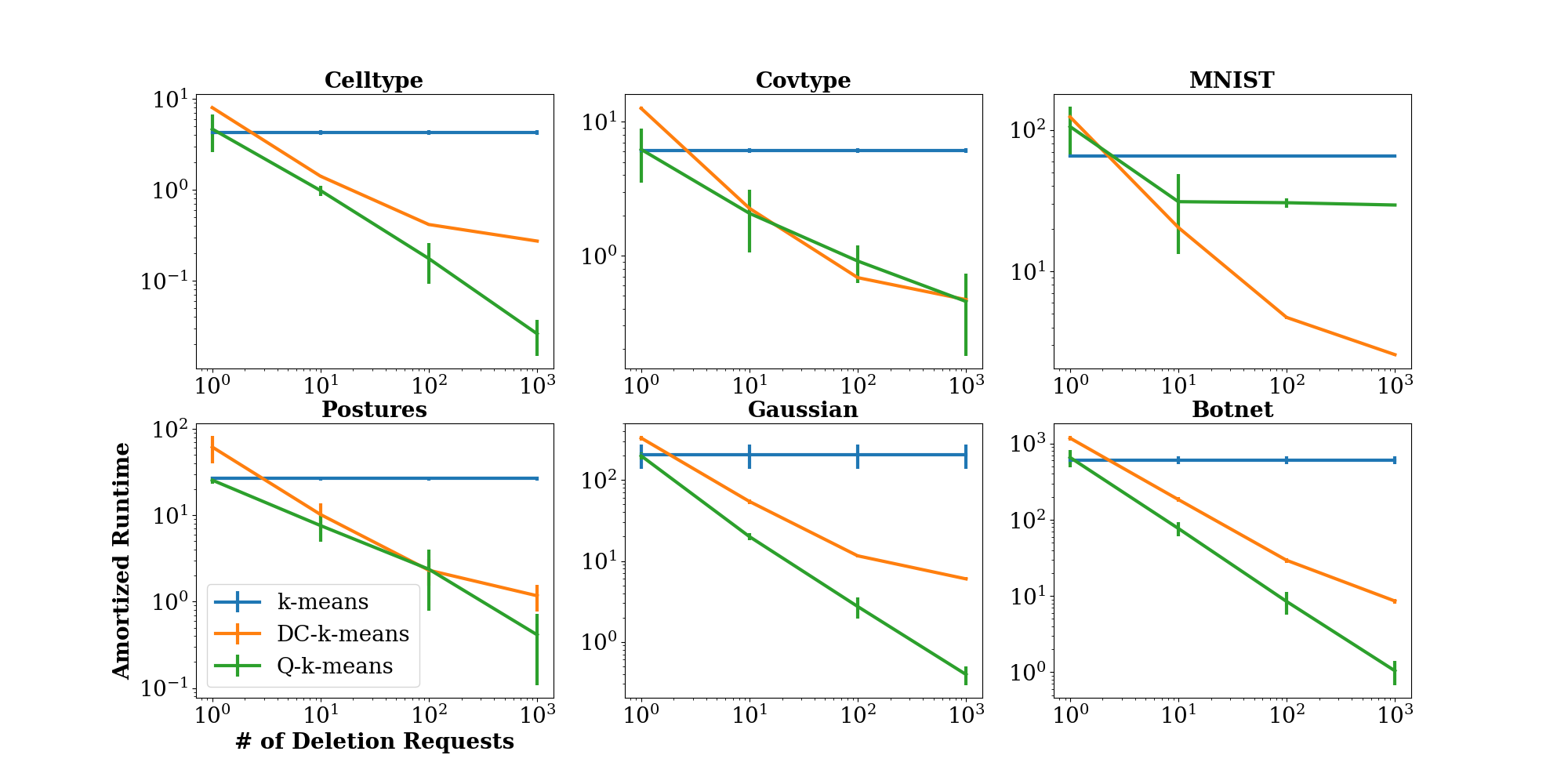}
\vspace{-15pt}
\caption{\small {Online deletion efficiency: \# of deletions vs. amortized runtime (secs) for 3 algorithms on 6 datasets.}}
\vspace{-10pt}
\end{figure}

In particular, note that \texttt{MNIST} has the highest $d/n$ ratio of the datasets we tried, followed by \texttt{Covtype}, These two datasets are, respectively, the datasets for which Q-$k$-means offers the least speedup. On the other hand, DC-$k$-means offers consistently increasing speedup as $n$ increases, for fixed $d$. Furthermore, we see that Q-$k$-means tends to have higher variance around its deletion efficiency, due to the randomness in centroid stabilization having a larger impact than the randomness in the dataset partitioning. We remark that 1,000 deletions is less than 10\% of every dataset we test on, and statistical performance remains virtually unchanged throughout the benchmark. In Figure 1, we plot the amortized runtime on the online deletion benchmark as a function of number of deletions in the stream. We refer the reader to Appendix D for supplementary experiments providing more detail on our methods.

\vspace{-5pt}
\section{Discussion}
\vspace{-5pt}
\label{sec:discussion}

At present, the main options for deletion efficient supervised methods are linear models, support vector machines, and non-parametric regressions. While our analysis here focuses on the concrete problem of clustering, we have proposed four design principles which we envision as the pillars of deletion efficient learning algorithms. We discuss the potential application of these methods to other supervised learning techniques.

\vspace{-2pt}
\paragraph{Segmented Regression} Segmented (or piece-wise) linear regression is a common relaxation of canonical regression models \cite{muggeo2016testing,muggeo2008segmented,muggeo2003estimating}. It should be possible to support a variant of segmented regression by combining Q-$k$-means with linear least squares regression. Each cluster could be given a separate linear model, trained only on the datapoints in said cluster. At deletion time, Q-$k$-means would likely keep the clusters stable, enabling a simple linear update to the model corresponding to the cluster from which the deleted point belonged.

\vspace{-5pt}
\paragraph{Kernel Regression} Kernel regressions in the style of random Fourier features  \cite{rahimi2008random} could be readily amended to support efficient deletions for large-scale supervised learning. Random features do not depend on data, and thus only the linear layer over the feature space requires updating for deletion. Furthermore, random Fourier feature methods have been shown to have affinity for quantization \cite{zhang2018low}.

\vspace{-5pt}
\paragraph{Decision Trees and Random Forests} Quantization is also a promising approach for decision trees. By quantizing or randomizing decision tree splitting criteria (such as in \cite{geurts2006extremely}) it seems possible to support efficient deletion. Furthermore, random forests have a natural affinity with bagging, which naturally can be used to impose modularity.   

\vspace{-5pt}
\paragraph{Deep Neural Networks and Stochastic Gradient Descent}
A line of research has observed the robustness of neural network training robustness to quantization and pruning \cite{vanhoucke2011improving,jouppi2017datacenter,gysel2018ristretto,rastegari2016xnor,courbariaux2014training,lin2016fixed}. It could be possible to leverage these techniques to quantize gradient updates during SGD-style optimization, enabling a notion of parameter stability analgous to that in Q-$k$-means. This would require larger batch sizes and fewer gradient steps in order to scale well. It is also possible that approximate deletion methods may be able to overcome shortcomings of exact deletion methods for large neural models.

\vspace{-5pt}
\section{Conclusion}
\vspace{-5pt}
In this work, we developed a notion of deletion efficiency for large-scale learning systems, proposed provably deletion efficient unsupervised clustering algorithms, and identified potential algorithmic principles that may enable deletion efficiency for other learning algorithms and paradigms.  We have only scratched the surface of understanding deletion efficiency in learning systems. Throughout, we made a number of simplifying assumptions, such that there is only one model and only one database in our system. We  also assumed that user-based deletion requests correspond to only a single data point. Understanding deletion efficiency in a system with many models and many databases, as well as complex user-to-data relationships, is an important direction for future work.  

\noindent \textbf{Acknowledgments:}  This research was partially supported by NSF Awards AF:1813049, CCF:1704417, and CCF 1763191, NIH R21 MD012867-01, NIH P30AG059307, an Office of Naval Research Young Investigator Award (N00014-18-1-2295), a seed grant from Stanford's Institute for Human-Centered AI, and the Chan-Zuckerberg Initiative. We would also like to thank I. Lemhadri, B. He, V. Bagaria, J. Thomas and anonymous reviewers for helpful discussion and feedback.  

\bibliographystyle{abbrv}
\bibliography{cites.bib}

\begin{thebibliography}{10}

\bibitem{abadi2016deep}
M.~Abadi, A.~Chu, I.~Goodfellow, H.~B. McMahan, I.~Mironov, K.~Talwar, and
  L.~Zhang.
\newblock Deep learning with differential privacy.
\newblock In {\em Proceedings of the 2016 ACM SIGSAC Conference on Computer and
  Communications Security}, pages 308--318. ACM, 2016.

\bibitem{abu2012learning}
Y.~S. Abu-Mostafa, M.~Magdon-Ismail, and H.-T. Lin.
\newblock {\em Learning from data}, volume~4.
\newblock AMLBook New York, NY, USA:, 2012.

\bibitem{aloise2009np}
D.~Aloise, A.~Deshpande, P.~Hansen, and P.~Popat.
\newblock Np-hardness of euclidean sum-of-squares clustering.
\newblock {\em Machine learning}, 75(2):245--248, 2009.

\bibitem{altman1992introduction}
N.~S. Altman.
\newblock An introduction to kernel and nearest-neighbor nonparametric
  regression.
\newblock {\em The American Statistician}, 46(3):175--185, 1992.

\bibitem{arthur2007k}
D.~Arthur and S.~Vassilvitskii.
\newblock k-means++: The advantages of careful seeding.
\newblock In {\em Proceedings of the eighteenth annual ACM-SIAM symposium on
  Discrete algorithms}, pages 1027--1035. Society for Industrial and Applied
  Mathematics, 2007.

\bibitem{atkeson1997locally}
C.~G. Atkeson, A.~W. Moore, and S.~Schaal.
\newblock Locally weighted learning for control.
\newblock In {\em Lazy learning}, pages 75--113. Springer, 1997.

\bibitem{bachem2017distributed}
O.~Bachem, M.~Lucic, and A.~Krause.
\newblock Distributed and provably good seedings for k-means in constant
  rounds.
\newblock In {\em Proceedings of the 34th International Conference on Machine
  Learning-Volume 70}, pages 292--300. JMLR. org, 2017.

\bibitem{bahmani2012scalable}
B.~Bahmani, B.~Moseley, A.~Vattani, R.~Kumar, and S.~Vassilvitskii.
\newblock Scalable k-means++.
\newblock {\em Proceedings of the VLDB Endowment}, 5(7):622--633, 2012.

\bibitem{balcan2013distributed}
M.-F.~F. Balcan, S.~Ehrlich, and Y.~Liang.
\newblock Distributed $ k $-means and $ k $-median clustering on general
  topologies.
\newblock In {\em Advances in Neural Information Processing Systems}, pages
  1995--2003, 2013.

\bibitem{berman1993optimization}
O.~Berman and N.~Ashrafi.
\newblock Optimization models for reliability of modular software systems.
\newblock {\em IEEE Transactions on Software Engineering}, 19(11):1119--1123,
  1993.

\bibitem{Birattari:1999:LLM:340534.340673}
M.~Birattari, G.~Bontempi, and H.~Bersini.
\newblock Lazy learning meets the recursive least squares algorithm.
\newblock In {\em Proceedings of the 1998 Conference on Advances in Neural
  Information Processing Systems II}, pages 375--381, Cambridge, MA, USA, 1999.
  MIT Press.

\bibitem{blackard1999comparative}
J.~A. Blackard and D.~J. Dean.
\newblock Comparative accuracies of artificial neural networks and discriminant
  analysis in predicting forest cover types from cartographic variables.
\newblock {\em Computers and electronics in agriculture}, 24(3):131--151, 1999.

\bibitem{bogdanov2018implementation}
D.~Bogdanov, L.~Kamm, S.~Laur, and V.~Sokk.
\newblock Implementation and evaluation of an algorithm for cryptographically
  private principal component analysis on genomic data.
\newblock {\em IEEE/ACM transactions on computational biology and
  bioinformatics}, 15(5):1427--1432, 2018.

\bibitem{bonawitz2017practical}
K.~Bonawitz, V.~Ivanov, B.~Kreuter, A.~Marcedone, H.~B. McMahan, S.~Patel,
  D.~Ramage, A.~Segal, and K.~Seth.
\newblock Practical secure aggregation for privacy-preserving machine learning.
\newblock In {\em Proceedings of the 2017 ACM SIGSAC Conference on Computer and
  Communications Security}, pages 1175--1191. ACM, 2017.

\bibitem{bontempi2001local}
G.~Bontempi, H.~Bersini, and M.~Birattari.
\newblock The local paradigm for modeling and control: from neuro-fuzzy to lazy
  learning.
\newblock {\em Fuzzy sets and systems}, 121(1):59--72, 2001.

\bibitem{bost2015machine}
R.~Bost, R.~A. Popa, S.~Tu, and S.~Goldwasser.
\newblock Machine learning classification over encrypted data.
\newblock In {\em NDSS}, 2015.

\bibitem{bottou1998online}
L.~Bottou.
\newblock Online learning and stochastic approximations.
\newblock {\em On-line learning in neural networks}, 17(9):142, 1998.

\bibitem{cao2015towards}
Y.~Cao and J.~Yang.
\newblock Towards making systems forget with machine unlearning.
\newblock In {\em 2015 IEEE Symposium on Security and Privacy}, pages 463--480.
  IEEE, 2015.

\bibitem{cauwenberghs2001incremental}
G.~Cauwenberghs and T.~Poggio.
\newblock Incremental and decremental support vector machine learning.
\newblock In {\em Advances in neural information processing systems}, pages
  409--415, 2001.

\bibitem{chaudhuri2011differentially}
K.~Chaudhuri, C.~Monteleoni, and A.~D. Sarwate.
\newblock Differentially private empirical risk minimization.
\newblock {\em Journal of Machine Learning Research}, 12(Mar):1069--1109, 2011.

\bibitem{chaudhuri2013near}
K.~Chaudhuri, A.~D. Sarwate, and K.~Sinha.
\newblock A near-optimal algorithm for differentially-private principal
  components.
\newblock {\em The Journal of Machine Learning Research}, 14(1):2905--2943,
  2013.

\bibitem{coomans1982alternative}
D.~Coomans and D.~L. Massart.
\newblock Alternative k-nearest neighbour rules in supervised pattern
  recognition: Part 1. k-nearest neighbour classification by using alternative
  voting rules.
\newblock {\em Analytica Chimica Acta}, 136:15--27, 1982.

\bibitem{righttobeforgotten}
{Council of European Union}.
\newblock Council regulation (eu) no 2012/0011, 2014.
\newblock
  \url{https://eur-lex.europa.eu/legal-content/EN/TXT/?uri=CELEX:52012PC0011}.

\bibitem{gdpr}
{Council of European Union}.
\newblock Council regulation (eu) no 2016/678, 2014.
\newblock \url{https://eur-lex.europa.eu/eli/reg/2016/679/oj}.

\bibitem{courbariaux2014training}
M.~Courbariaux, Y.~Bengio, and J.-P. David.
\newblock Training deep neural networks with low precision multiplications.
\newblock {\em arXiv preprint arXiv:1412.7024}, 2014.

\bibitem{cover2012elements}
T.~M. Cover and J.~A. Thomas.
\newblock {\em Elements of information theory}.
\newblock John Wiley \& Sons, 2012.

\bibitem{regression}
R.~E.~W. D.~A.~Belsley, E.~Kuh.
\newblock {\em Regression Diagnostics: Identifying Influential Data and Sources
  of Collinearity}.
\newblock John Wiley \& Sons, Inc., New York, NY, USA, 1980.

\bibitem{dasgupta2003elementary}
S.~Dasgupta and A.~Gupta.
\newblock An elementary proof of a theorem of johnson and lindenstrauss.
\newblock {\em Random Structures and Algorithms}, 22(1):60--65, 2003.

\bibitem{devroye1979distribution}
L.~Devroye and T.~Wagner.
\newblock Distribution-free performance bounds for potential function rules.
\newblock {\em IEEE Transactions on Information Theory}, 25(5):601--604, 1979.

\bibitem{dwork2014algorithmic}
C.~Dwork, A.~Roth, et~al.
\newblock The algorithmic foundations of differential privacy.
\newblock {\em Foundations and Trends in Theoretical Computer Science},
  9(3--4):211--407, 2014.

\bibitem{erkin2012generating}
Z.~Erkin, T.~Veugen, T.~Toft, and R.~L. Lagendijk.
\newblock Generating private recommendations efficiently using homomorphic
  encryption and data packing.
\newblock {\em IEEE transactions on information forensics and security},
  7(3):1053--1066, 2012.

\bibitem{friedman2001elements}
J.~Friedman, T.~Hastie, and R.~Tibshirani.
\newblock {\em The elements of statistical learning}.
\newblock Number~10. Springer series in statistics New York, 2001.

\bibitem{galinsky2016population}
K.~J. Galinsky, P.-R. Loh, S.~Mallick, N.~J. Patterson, and A.~L. Price.
\newblock Population structure of uk biobank and ancient eurasians reveals
  adaptation at genes influencing blood pressure.
\newblock {\em The American Journal of Human Genetics}, 99(5):1130--1139, 2016.

\bibitem{gardner20143d}
A.~Gardner, C.~A. Duncan, J.~Kanno, and R.~Selmic.
\newblock 3d hand posture recognition from small unlabeled point sets.
\newblock In {\em 2014 IEEE International Conference on Systems, Man, and
  Cybernetics (SMC)}, pages 164--169. IEEE, 2014.

\bibitem{gardner2014measuring}
A.~Gardner, J.~Kanno, C.~A. Duncan, and R.~Selmic.
\newblock Measuring distance between unordered sets of different sizes.
\newblock In {\em Proceedings of the IEEE Conference on Computer Vision and
  Pattern Recognition}, pages 137--143, 2014.

\bibitem{geurts2006extremely}
P.~Geurts, D.~Ernst, and L.~Wehenkel.
\newblock Extremely randomized trees.
\newblock {\em Machine learning}, 63(1):3--42, 2006.

\bibitem{ghorbani2019data}
A.~Ghorbani and J.~Zou.
\newblock Data shapley: Equitable valuation of data for machine learning.
\newblock {\em arXiv preprint arXiv:1904.02868}, 2019.

\bibitem{gray1998quantization}
R.~M. Gray and D.~L. Neuhoff.
\newblock Quantization.
\newblock {\em IEEE transactions on information theory}, 44(6):2325--2383,
  1998.

\bibitem{guha1998cure}
S.~Guha, R.~Rastogi, and K.~Shim.
\newblock Cure: an efficient clustering algorithm for large databases.
\newblock In {\em ACM Sigmod Record}, pages 73--84. ACM, 1998.

\bibitem{gysel2018ristretto}
P.~Gysel, J.~Pimentel, M.~Motamedi, and S.~Ghiasi.
\newblock Ristretto: A framework for empirical study of resource-efficient
  inference in convolutional neural networks.
\newblock {\em IEEE Transactions on Neural Networks and Learning Systems},
  2018.

\bibitem{hammarling2008updating}
S.~Hammarling and C.~Lucas.
\newblock Updating the qr factorization and the least squares problem.
\newblock {\em Tech. Report, The University of Manchester (2008)}, 2008.

\bibitem{han2018mapping}
X.~Han, R.~Wang, Y.~Zhou, L.~Fei, H.~Sun, S.~Lai, A.~Saadatpour, Z.~Zhou,
  H.~Chen, F.~Ye, et~al.
\newblock Mapping the mouse cell atlas by microwell-seq.
\newblock {\em Cell}, 172(5):1091--1107, 2018.

\bibitem{higham2002accuracy}
N.~J. Higham.
\newblock {\em Accuracy and stability of numerical algorithms}, volume~80.
\newblock Siam, 2002.

\bibitem{hinneburg2007denclue}
A.~Hinneburg and H.-H. Gabriel.
\newblock Denclue 2.0: Fast clustering based on kernel density estimation.
\newblock In {\em International symposium on intelligent data analysis}, pages
  70--80. Springer, 2007.

\bibitem{johnson1984extensions}
W.~B. Johnson and J.~Lindenstrauss.
\newblock Extensions of lipschitz mappings into a hilbert space.
\newblock {\em Contemporary mathematics}, 26(189-206):1, 1984.

\bibitem{jouppi2017datacenter}
N.~P. Jouppi, C.~Young, N.~Patil, D.~Patterson, G.~Agrawal, R.~Bajwa, S.~Bates,
  S.~Bhatia, N.~Boden, A.~Borchers, et~al.
\newblock In-datacenter performance analysis of a tensor processing unit.
\newblock In {\em Computer Architecture (ISCA), 2017 ACM/IEEE 44th Annual
  International Symposium on}, pages 1--12. IEEE, 2017.

\bibitem{kearns1999algorithmic}
M.~Kearns and D.~Ron.
\newblock Algorithmic stability and sanity-check bounds for leave-one-out
  cross-validation.
\newblock {\em Neural computation}, 11(6):1427--1453, 1999.

\bibitem{knoblauch2008closed}
A.~Knoblauch.
\newblock Closed-form expressions for the moments of the binomial probability
  distribution.
\newblock {\em SIAM Journal on Applied Mathematics}, 69(1):197--204, 2008.

\bibitem{knops2006normalized}
Z.~F. Knops, J.~A. Maintz, M.~A. Viergever, and J.~P. Pluim.
\newblock Normalized mutual information based registration using k-means
  clustering and shading correction.
\newblock {\em Medical image analysis}, 10(3):432--439, 2006.

\bibitem{kutin2002almost}
S.~Kutin and P.~Niyogi.
\newblock Almost-everywhere algorithmic stability and generalization error:
  Tech. rep.
\newblock Technical report, TR-2002-03: University of Chicago, Computer Science
  Department, 2002.

\bibitem{lecun1998gradient}
Y.~LeCun, L.~Bottou, Y.~Bengio, P.~Haffner, et~al.
\newblock Gradient-based learning applied to document recognition.
\newblock {\em Proceedings of the IEEE}, 86(11):2278--2324, 1998.

\bibitem{lin2016fixed}
D.~Lin, S.~Talathi, and S.~Annapureddy.
\newblock Fixed point quantization of deep convolutional networks.
\newblock In {\em International Conference on Machine Learning}, pages
  2849--2858, 2016.

\bibitem{lloyd1982least}
S.~Lloyd.
\newblock Least squares quantization in pcm.
\newblock {\em IEEE transactions on information theory}, 28(2):129--137, 1982.

\bibitem{lowe1999object}
D.~G. Lowe et~al.
\newblock Object recognition from local scale-invariant features.
\newblock In {\em ICCV}, number~2, pages 1150--1157, 1999.

\bibitem{statisticalcomputation}
J.~H. Maindonald.
\newblock {\em Statistical Computation}.
\newblock John Wiley \& Sons, Inc., New York, NY, USA, 1984.

\bibitem{meidan2018n}
Y.~Meidan, M.~Bohadana, Y.~Mathov, Y.~Mirsky, A.~Shabtai, D.~Breitenbacher, and
  Y.~Elovici.
\newblock N-baiot—network-based detection of iot botnet attacks using deep
  autoencoders.
\newblock {\em IEEE Pervasive Computing}, 17(3):12--22, 2018.

\bibitem{muggeo2003estimating}
V.~M. Muggeo.
\newblock Estimating regression models with unknown break-points.
\newblock {\em Statistics in medicine}, 22(19):3055--3071, 2003.

\bibitem{muggeo2016testing}
V.~M. Muggeo.
\newblock Testing with a nuisance parameter present only under the alternative:
  a score-based approach with application to segmented modelling.
\newblock {\em Journal of Statistical Computation and Simulation},
  86(15):3059--3067, 2016.

\bibitem{muggeo2008segmented}
V.~M. Muggeo et~al.
\newblock Segmented: an r package to fit regression models with broken-line
  relationships.
\newblock {\em R news}, 8(1):20--25, 2008.

\bibitem{mukherjee2006learning}
S.~Mukherjee, P.~Niyogi, T.~Poggio, and R.~Rifkin.
\newblock Learning theory: stability is sufficient for generalization and
  necessary and sufficient for consistency of empirical risk minimization.
\newblock {\em Advances in Computational Mathematics}, 25(1-3):161--193, 2006.

\bibitem{nadaraya1964estimating}
E.~A. Nadaraya.
\newblock On estimating regression.
\newblock {\em Theory of Probability \& Its Applications}, 9(1):141--142, 1964.

\bibitem{nikolaenko2013privacy}
V.~Nikolaenko, U.~Weinsberg, S.~Ioannidis, M.~Joye, D.~Boneh, and N.~Taft.
\newblock Privacy-preserving ridge regression on hundreds of millions of
  records.
\newblock In {\em Security and Privacy (SP), 2013 IEEE Symposium on}, pages
  334--348. IEEE, 2013.

\bibitem{ohrimenko2016oblivious}
O.~Ohrimenko, F.~Schuster, C.~Fournet, A.~Mehta, S.~Nowozin, K.~Vaswani, and
  M.~Costa.
\newblock Oblivious multi-party machine learning on trusted processors.
\newblock In {\em USENIX Security Symposium}, pages 619--636, 2016.

\bibitem{papernot2016semi}
N.~Papernot, M.~Abadi, U.~Erlingsson, I.~Goodfellow, and K.~Talwar.
\newblock Semi-supervised knowledge transfer for deep learning from private
  training data.
\newblock {\em arXiv preprint arXiv:1610.05755}, 2016.

\bibitem{pedregosa2011scikit}
F.~Pedregosa, G.~Varoquaux, A.~Gramfort, V.~Michel, B.~Thirion, O.~Grisel,
  M.~Blondel, P.~Prettenhofer, R.~Weiss, V.~Dubourg, et~al.
\newblock Scikit-learn: Machine learning in python.
\newblock {\em Journal of machine learning research}, 12(Oct):2825--2830, 2011.

\bibitem{scikit-learn}
F.~Pedregosa, G.~Varoquaux, A.~Gramfort, V.~Michel, B.~Thirion, O.~Grisel,
  M.~Blondel, P.~Prettenhofer, R.~Weiss, V.~Dubourg, J.~Vanderplas, A.~Passos,
  D.~Cournapeau, M.~Brucher, M.~Perrot, and E.~Duchesnay.
\newblock Scikit-learn: Machine learning in {P}ython.
\newblock {\em Journal of Machine Learning Research}, 12:2825--2830, 2011.

\bibitem{peleg2000distributed}
D.~Peleg.
\newblock Distributed computing.
\newblock {\em SIAM Monographs on discrete mathematics and applications},
  5:1--1, 2000.

\bibitem{poggio2004general}
T.~Poggio, R.~Rifkin, S.~Mukherjee, and P.~Niyogi.
\newblock General conditions for predictivity in learning theory.
\newblock {\em Nature}, 428(6981):419, 2004.

\bibitem{qin2016distributed}
J.~Qin, W.~Fu, H.~Gao, and W.~X. Zheng.
\newblock Distributed $ k $-means algorithm and fuzzy $ c $-means algorithm for
  sensor networks based on multiagent consensus theory.
\newblock {\em IEEE transactions on cybernetics}, 47(3):772--783, 2016.

\bibitem{rahimi2008random}
A.~Rahimi and B.~Recht.
\newblock Random features for large-scale kernel machines.
\newblock In {\em Advances in neural information processing systems}, pages
  1177--1184, 2008.

\bibitem{rastegari2016xnor}
M.~Rastegari, V.~Ordonez, J.~Redmon, and A.~Farhadi.
\newblock Xnor-net: Imagenet classification using binary convolutional neural
  networks.
\newblock In {\em European Conference on Computer Vision}, pages 525--542.
  Springer, 2016.

\bibitem{rousseeuw1987silhouettes}
P.~J. Rousseeuw.
\newblock Silhouettes: a graphical aid to the interpretation and validation of
  cluster analysis.
\newblock {\em Journal of computational and applied mathematics}, 20:53--65,
  1987.

\bibitem{schellekens2018quantized}
V.~Schellekens and L.~Jacques.
\newblock Quantized compressive k-means.
\newblock {\em IEEE Signal Processing Letters}, 25(8):1211--1215, 2018.

\bibitem{schelteramnesia}
S.~Schelter.
\newblock “amnesia”--towards machine learning models that can forget user
  data very fast.
\newblock In {\em 1st International Workshop on Applied AI for Database Systems
  and Applications (AIDB’19)}, 2019.

\bibitem{schomm2013marketplaces}
F.~Schomm, F.~Stahl, and G.~Vossen.
\newblock Marketplaces for data: an initial survey.
\newblock {\em ACM SIGMOD Record}, 42(1):15--26, 2013.

\bibitem{schrauwen2007overview}
B.~Schrauwen, D.~Verstraeten, and J.~Van~Campenhout.
\newblock An overview of reservoir computing: theory, applications and
  implementations.
\newblock In {\em Proceedings of the 15th european symposium on artificial
  neural networks. p. 471-482 2007}, pages 471--482, 2007.

\bibitem{shalev2010learnability}
S.~Shalev-Shwartz, O.~Shamir, N.~Srebro, and K.~Sridharan.
\newblock Learnability, stability and uniform convergence.
\newblock {\em Journal of Machine Learning Research}, 11(Oct):2635--2670, 2010.

\bibitem{shannon1949communication}
C.~E. Shannon.
\newblock Communication theory of secrecy systems.
\newblock {\em Bell system technical journal}, 28(4):656--715, 1949.

\bibitem{sudlow2015uk}
C.~Sudlow, J.~Gallacher, N.~Allen, V.~Beral, P.~Burton, J.~Danesh, P.~Downey,
  P.~Elliott, J.~Green, M.~Landray, et~al.
\newblock Uk biobank: an open access resource for identifying the causes of a
  wide range of complex diseases of middle and old age.
\newblock {\em PLoS medicine}, 12(3):e1001779, 2015.

\bibitem{truong2012data}
H.-L. Truong, M.~Comerio, F.~De~Paoli, G.~Gangadharan, and S.~Dustdar.
\newblock Data contracts for cloud-based data marketplaces.
\newblock {\em International Journal of Computational Science and Engineering},
  7(4):280--295, 2012.

\bibitem{tsai2014incremental}
C.-H. Tsai, C.-Y. Lin, and C.-J. Lin.
\newblock Incremental and decremental training for linear classification.
\newblock In {\em Proceedings of the 20th ACM SIGKDD international conference
  on Knowledge discovery and data mining}, pages 343--352. ACM, 2014.

\bibitem{van2011numpy}
S.~Van Der~Walt, S.~C. Colbert, and G.~Varoquaux.
\newblock The numpy array: a structure for efficient numerical computation.
\newblock {\em Computing in Science \& Engineering}, 13(2):22, 2011.

\bibitem{van1983matrix}
C.~F. Van~Loan and G.~H. Golub.
\newblock {\em Matrix computations}.
\newblock Johns Hopkins University Press, 1983.

\bibitem{vanhoucke2011improving}
V.~Vanhoucke, A.~Senior, and M.~Z. Mao.
\newblock Improving the speed of neural networks on cpus.
\newblock Citeseer.

\bibitem{veale2018algorithms}
M.~Veale, R.~Binns, and L.~Edwards.
\newblock Algorithms that remember: model inversion attacks and data protection
  law.
\newblock {\em Philosophical Transactions of the Royal Society A: Mathematical,
  Physical and Engineering Sciences}, 376(2133):20180083, 2018.

\bibitem{villaronga2018humans}
E.~F. Villaronga, P.~Kieseberg, and T.~Li.
\newblock Humans forget, machines remember: Artificial intelligence and the
  right to be forgotten.
\newblock {\em Computer Law \& Security Review}, 34(2):304--313, 2018.

\bibitem{vinh2010information}
N.~X. Vinh, J.~Epps, and J.~Bailey.
\newblock Information theoretic measures for clusterings comparison: Variants,
  properties, normalization and correction for chance.
\newblock {\em Journal of Machine Learning Research}, 11(Oct):2837--2854, 2010.

\bibitem{Webb2010}
G.~I. Webb.
\newblock {\em Lazy Learning}, pages 571--572.
\newblock Springer US, 2010.

\bibitem{yin2012self}
J.~Yin and Y.~Meng.
\newblock Self-organizing reservior computing with dynamically regulated
  cortical neural networks.
\newblock In {\em The 2012 International Joint Conference on Neural Networks
  (IJCNN)}, pages 1--7. IEEE, 2012.

\bibitem{zeb2017updating}
S.~Zeb and M.~Yousaf.
\newblock Updating qr factorization procedure for solution of linear least
  squares problem with equality constraints.
\newblock {\em Journal of inequalities and applications}, 2017(1):281, 2017.

\bibitem{zhang2018low}
J.~Zhang, A.~May, T.~Dao, and C.~R{\'e}.
\newblock Low-precision random fourier features for memory-constrained kernel
  approximation.
\newblock {\em arXiv preprint arXiv:1811.00155}, 2018.

\bibitem{zhao2009parallel}
W.~Zhao, H.~Ma, and Q.~He.
\newblock Parallel k-means clustering based on mapreduce.
\newblock In {\em IEEE International Conference on Cloud Computing}, pages
  674--679. Springer, 2009.

\end{thebibliography}

\newpage
\appendix
\section{Supplementary Materials}

Here we provide material supplementary to the main text. While some of the material provided here may be somewhat redundant, it also contains technical minutia perhaps too detailed for the main body.

\subsection{Online Data Deletion}

We precisely define the notion of a \emph{learning algorithm} for theoretical discussion in the context of data deletion. 

\begin{definition}{Learning Algorithm}

A \emph{learning} algorithm $A$ is an algorithm (on some standard model of computation) taking values in some hypothesis space and metadata space $\mathcal{H} \times \mathcal{M}$ based on an input dataset $D$. Learning algorithm $A$ may be randomized, implying a conditional distribution over $\mathcal{H} \times \mathcal{M}$ given $D$. Finally, learning algorithms must process each datapoint in $D$ at least once, and are constrained to sequential computation only, yielding a runtime bounded by $\Omega(n)$.
\end{definition}

We re-state the definition of data deletion. We distinguish between a \emph{deletion operation} and a \emph{robust deletion operation}. We focus on the former throughout our main body, as it is appropriate for average-case analysis in a non-security context. We use $=_d$ to denote distributional equality.

\begin{definition}{Data Deletion Operation}

Fix any dataset $D$ and learning algorithm $A$. Operation $R_A$ is a \emph{deletion operation} for $A$ if $R_A(D,A(D),i) =_d A(D_{-i})$ for any $i$ selected independently of $A(D)$.
\end{definition}

For notational simplicity, we may let $R_A$ refer to an entire sequence of deletions ($\Delta = \{i_1,i_2,...,i_m\}$) by writing  $R_A(D,A(D),\Delta)$. This notation means the output of a sequence of applications of $R_A$ to each $i$ in deletion sequence $\Delta$. We also may drop the dependence on $A$ when it is understood for which $A$ the deletion operation $R$ corresponds. We also drop the arguments for $A$ and $R$ when they are understood from context. For example, when dataset $D$ can be inferred from context, we let $A_{-i}$ directly mean $A(D_{-i})$ and when  and deletion stream $\Delta$ can be inferred, we let $R$ directly mean $R(D,A(D),\Delta)$.

Our definition is somewhat analogous to \emph{information-theoretic} (or perfect) secrecy in cryptography \cite{shannon1949communication}. Much like in cryptography, it is possible to relax to weaker notions -- for example, by statistically approximating deletion and bounding the amount of computation some hypothetical adversary could use to determine if a genuine deletion took place. Such relaxations are required for encryption algorithms because perfect secrecy can only be achieved via one-time pad \cite{shannon1949communication}. In the context of deletion operations, retraining from scratch is, at least slightly, analogous to one-time pad encryption: both are simple solutions that satisfy distributional equality requirements, but both solutions are impractical. However, unlike in encryption, when it comes to deletion, we can, in fact, at least for some learning algorithms, find deletion operations that would be both practical and perfect.

The upcoming \emph{robust} definition may be of more interest in a worst-case, security setting. In such a setting, an adaptive adversary makes deletion requests while also having perfect eavesdropping capabilities to the server (or at least the internal state of the learning algorithm, model and metadata).  

\begin{definition}{Robust Data Deletion Operation}

Fix any dataset $D$ and learning algorithm $A$. Operation $R_A$ is a \emph{robust deletion operation} if $R_A(D,A(D),i) =_d A(D_{-i})$ in distribution, for any $i$, perhaps selected by an adversarial agent with knowledge of $A(D)$.
\end{definition}

To illustrate the difference between these two definitions, consider Q-$k$-means and DC-$k$-means. Assume an adversary has compromised the server with read-access and gained knowledge of the algorithm's internal state. Further assume that said adversary may issue deletion requests. Such a powerful adversary could compromise the exactness of DC-$k$-means deletion operations by deleting datapoints from \emph{specific} leaves. For example, if the adversary always deletes datapoints partitioned to the first leaf, then the number of datapoints assigned to each leaf is no longer uniform or independent of deletion requests. In principle, this, at least rigorously speaking, violates equality in distribution. Note that this can only occur if requests are somehow dependent on the partition. However, despite an adversary being able to compromise the correctness of the deletion operation, it cannot compromise the efficiency.  That is because efficiency depends on the maximum number of datapoints partitioned to a particular leaf, a quantity which is decided randomly without input from the adversary. 

In the case of Q-$k$-means we can easily see the deletion is robust to the adversary by the enforced equality of outcome imposed by the deletion operation. However, an adversary with knowledge of algorithm state could make the Q-$k$-means deletion operation entirely inefficient by always deleting an initial centroid. This causes every single deletion to be satisfied by retraining from scratch. From the security perspective, it could be of interest to study deletion operations that are both robust and efficient.

We continue by defining the \emph{online} data deletion setting in the \emph{average-case}.

\begin{definition}{Online Data Deletion (Average-Case)}

We may formally define the runtime in the online deletion setting as the expected runtime of Algorithm 3. We amortize the total runtime by $m$.

\begin{algorithm}[h!]
   \caption{Online Data Deletion}
   \label{alg:online_del}
\begin{algorithmic}
\footnotesize
  \STATE {\bfseries Input:} Dataset $D$, learning algorithm $A$, deletion operation $R$   
  \STATE {\bfseries Parameters:} $\alpha \in (0,1)$
  \STATE $\mu \gets A(D)$
  \STATE $m \gets \Theta(|D|^\alpha)$ 
 \FOR{$\tau = 1$ {\bfseries to} $m$}
    \STATE $i \gets \textbf{Unif}[1,...,|D|]$ //constant time
    \STATE $\mu \gets R(D,\mu,i)$
    \STATE $D \gets D_{-i}$ //constant time
    \ENDFOR

\end{algorithmic}
\end{algorithm}

\end{definition}

$\textbf{Fractional power regime.}$ For dataset size $n$, when $m = \Theta(n^\alpha)$ for $0 < \alpha < 1$, we say we are in the \emph{fractional power regime}. For $k$-means, our proposed algorithms achieve the ideal lower bound for small enough $\alpha$, but not for all $\alpha$ in $(0,1)$.

Online data deletion is interesting in both the \emph{average-case} setting (treated here), where the indices $\{i_1,\ldots,i_m\}$ are chosen uniformly and independently without replacement, as well as in a \emph{worst-case} setting, where the sequence of indices is computed adversarially (left for future work). It may also be practical to include a bound on the amount of memory available to the data deletion operation and model (including metadata) as an additional constraint.

\begin{definition}{Deletion Efficient Learning Algorithm}

Recall the $\Omega(n/m)$ lower bound on amortized computation for any sequential learning algorithm in the online deletion setting (Section 2). Given some fractional power scaling $m = \Theta(n^\alpha)$, we say an algorithm $A$ is $\alpha$-deletion efficient if it runs Algorithm 3 in amortized time $O(n^{1-\alpha})$.

\end{definition}

\paragraph{Inference Time}
Of course, lazy learning and non-parametric techniques are a clear exception to our notions of learning algorithm. For these methods, data is processed at inference time rather than training time -- a more complete study of the systems trade-offs between training time, inference time, and deletion time is left for future work.

\subsection{Approximate Data Deletion}

We present one possible relaxation from exact to \emph{approximate} data deletion.

\begin{definition}{Approximate deletion}

We say that such a data deletion operation $R_A$ is an $\delta$-deletion for algorithm $A$ if, for all $D$ and for every measurable subset $S\subseteq\mathcal{H} \times M$:

$$\textbf{Pr}[A(D_{-i}) \in S | D_{-i}] \geq \delta  \textbf{Pr}[R_A(D,A(D),i) \in S | D_{-i}]$$  

\end{definition}

The above definition corresponds to requiring that the probability that the data deletion operation returns a model in some specified set, $S$, cannot be more than a $\delta^{-1}$ factor larger than the probability that algorithm $A$ retrained on the dataset $D_{-i}$ returns a model in that set.  We note that the above definition \emph{does} allow for the possibility that some outcomes that have positive probability under $A(D_{-i})$ have zero probability under the deletion operation.  In such a case, an observer could conclude that the model was returned by running $A$ from scratch.

\subsubsection{Approximate Deletion and Differential Privacy}

We recall the definition of differential privacy \cite{dwork2014algorithmic}. A map, $A$, from a dataset $D$ to a set of outputs, $\mathcal{H}$, is said to be $\epsilon$-differentially private if, for any two datasets $D_1, D_2$ that differ in a single datapoint, and any subset $S \subset \mathcal{H}$, $\mathbf{Pr}[A(D_1) \in S] \le e^{\epsilon} \cdot \mathbf{Pr}[A(D_2) \in S].$ 

Under the relaxed notion of data deletion, it is natural to consider privatization as a manner to support approximation deletion. The idea could be to privatize the model, and then resolve deletion requests by ignoring them. However, there are some nuances involved here that one should be careful of. For example, differential privacy does not privatize the number of datapoints, but this should not be leaked in data deletion.  Furthermore, since we wish to support a stream of deletions in the online setting, we would need to use \emph{group differential privacy} \cite{dwork2014algorithmic}, which can greatly increase the amount of noise needed for privatization. Even worse, this requires selecting the group size (i.e. total privacy budget) during training time (at least for canonical constructions such as the Laplace mechanism). In differential privacy, this group size is not necessarily a hidden parameter. In the context of deletion, it could leak information about the total dataset size as well as how many deletions any given model instance has processed. While privatization-like methods are perhaps a viable approach to support approximate deletion, there remain some technical details to work out, and this is left for future work.

\section{Algorithmic Details}

In Appendix B, we present psuedo-code for the algorithms described in Section 3. We also reference \texttt{https://github.com/tginart/deletion-efficient-kmeans} for Python implementations of our algorithms.

\subsection{Quantized \(k\)-Means}

We present the psuedo-code for Q-$k$-means (Algo. 4). Q-$k$-means follows the iterative protocol as the canonical Lloyd's (and makes use of the $k$-means++ initialization). As mentioned in the main body, there are four key variations from the canonical Lloyd's algorithm that make this method different: quantization, memoization, balance correction, and early termination. The memoization of the optimization state and the early termination for increasing loss are self-explanatory from Algo. 4. We provide more details concerning the quantization step and the balance correction in Appendix B.1.1 and B.1.2 respectively.

    \begin{minipage}{.99\linewidth}
\begin{algorithm}[H]
   \caption{Quantized $k$-means}
   \label{qkmeans_full}
\begin{algorithmic}
\footnotesize
   \STATE {\bfseries Input:} data matrix $D \in \mathbf{R}^{n \times d}$
   \STATE {\bfseries Parameters:} $k \in \mathbf{N}$, $T \in \mathbf{N}$, $\gamma \in (0,1)$, $\epsilon > 0$
   \STATE $c \gets k^{++}(D)$ // \textit{initialize centroids with $k$-means++}
   \STATE Save initial centroids:  $\textsf{save}(c)$.
   \STATE $L \gets k$-means loss of initial partition $\pi$
   \FOR{$\tau = 1$ {\bfseries to} $T$}
        \STATE Store current centroids: $ c'\gets c$
        \STATE Compute centroids: $ c \gets c(\pi)$
       \FOR {$\kappa = 1$ {\bfseries to} $k$}
        \IF{$|\pi(c_\kappa)| < \frac{\gamma n}{k} $ } 
            \STATE Apply correction to $\gamma$-imbalanced partition:
            \STATE  $c_\kappa \gets |\pi(c_\kappa)|c_\kappa + (\frac{\gamma n}{k}-|\pi(c_\kappa)|)c'_\kappa $
        \ENDIF
       \ENDFOR
       \STATE Generate random phase $\theta \sim \mathbf{Unif}[-\frac{1}{2},\frac{1}{2}]^d$
        \STATE Quantize to $\epsilon$-lattice: $\hat{c} \gets Q(c;\theta)$
       \STATE Update partition: $\pi' \gets \pi(\hat{c})$
      \STATE Save state to metadata: $\textsf{save}(c,\theta,\hat{c},|\pi'|)$
       \STATE Compute loss $L'$
       \IF{$L' < L$}
        \STATE $(c, \pi, L) \gets (\hat{c}, \pi' , L')$ //update state
        \ELSE
        \STATE \textbf{break}
       \ENDIF
   \ENDFOR
   \\
   \textbf{return} $c$ //output final centroids as model
\end{algorithmic}
\end{algorithm}
\end{minipage}

Although it is rare, it is possible for a Lloyd's iteration to result in a degenerate (empty) cluster. In this scenario, we have two reasonable options. All of the theoretical guarantees are remain valid under both of the following options. The first option is to re-initialize a new cluster via a $k$-means++ seeding. Since the number of clusters $k$ and iterations $T$ are constant,  this does not impact any of the asymptotic deletion efficiency results. The second option is to simply leave a degenerate partition. This does not impact the upper bound on expected statistical performance which is derived only as a function of the $k$-means++ initialization. For most datasets, this issue hardly matters in practice, since Lloyd's iterations do not usually produce degenerate clusters (even the presence of quantization noise). In our implementation, we have chosen to re-initialize degenerate clusters, and are careful to account for this in our metadata, since such a re-initialization could trigger the need to retrain at deletion time if the re-initialized point is part of the deletion stream.

We present the pseudo-code for the deletion operation (Algo. 5), and then elaborate on the quantization scheme and balance correction.

    \begin{minipage}{.99\linewidth}
\begin{algorithm}[H]
  \caption{Deletion Op for Q-$k$-means }
  \label{delopq}
\begin{algorithmic}
\small
  \STATE {\bfseries Input:} data matrix $D \in \mathbf{R}^{n \times d}$,  target deletion index $i$, training metadata 
  \STATE Obtain target deletion point $p \gets D_i$
  \STATE Retrieve initial centroids from metadata: $\textsf{load}(c_0)$ 
     \IF{$p\in  c_0$ } 
            \STATE // Selected initial point.
            \STATE \textbf{return} $\textsf{Q-k-means}(D_{-i})$ // Need to retrain from scratch.
        \ELSE
    \FOR{$\tau = 1$ {\bfseries to} $T$}
    \STATE Retrieve state for iteration $\tau$: $\textsf{load}(c,\theta,\hat{c},|\pi|)$
     \STATE Cluster assignment of $p$: $\kappa\gets \mathbf{argmin}_k|p-c_k|$
     \STATE Perturbed centroid: $c'_\kappa \gets c_\kappa- p/|\pi(c_{\kappa})|$ 
     \STATE Apply $\gamma$-correction to $c'_\kappa$ if necessary
    \STATE Quantize perturbed centroid: $\hat{c}'_\kappa \gets Q(c'_\kappa;\theta)$
     \IF{$\hat{c}'_\kappa  \neq \hat{c}_\kappa$ }
            \STATE // Centroid perturbed $\rightarrow$ unstable quantization
             \STATE \textbf{return} $\textsf{Q-k-means}(D_{-i})$ // Need to retrain from scratch.
    \ENDIF
        \STATE Update metadata with perturbed state: $\textsf{save}(c',\theta,\hat{c}',|\pi'|)$
   \ENDFOR
   \ENDIF
    \STATE Update $D \gets D_{-i}$\\
  \textbf{return} $\hat{c}$ //Successfully verified centroid stability
\end{algorithmic}
\end{algorithm}
\end{minipage}

We proceed elaborate on the details of the balance correction and quantization steps 

\subsubsection{$\gamma$-Balanced Clusters}

\begin{definition}{$\gamma$-Balanced }

Given a partition $\pi$, we say it us \textit{$\gamma$-balanced} if $|\pi_{\kappa}| \geq \frac{\gamma n}{k}$ for all partitions $\kappa$. The partition is $\gamma$-imbalanced if it is not $\gamma$-balanced. 
\end{definition}

In Q-$k$-means, imbalanced partitions can lead to unstable quantized centroids. Hence, it is preferable to avoid such partitions. As can be seen in the pseudo-code, we add mass to small clusters to correct for $\gamma$-unbalance. At each iteration we apply the following formula on all clusters such that $|\pi_{\kappa}| \geq \frac{\gamma n}{k}$: 
$c_\kappa \gets |\pi(c_\kappa)|c_\kappa + (\frac{\gamma n}{k}-|\pi(c_\kappa)|)c'_\kappa $ where $c'$ denotes the centroids from the previous iteration. 

In prose, for small clusters, current centroids are averaged with the centroids from the previous iteration to increase stability.

For use in practice, a choice of $\gamma$ must be made. If no class balance information is known, then, based on our observations, setting $\gamma = 0.2$ is a solid heuristic for all but severely imbalanced datasets, in which case it is likely that DC-$k$-means would be preferable to Q-$k$-means.
\subsubsection{Quantizing with an $\epsilon$-Lattice}

We detail the quantization scheme used. A quantization maps analog values to a discrete set of points. In our scheme, we uniformly cover $\mathbf{R}^d$ with an $\epsilon$-lattice, and round analog values to the nearest vertex on the lattice. It is also important to add an independent, uniform random phase shift to each dimension of lattice, effectively de-biasing the quantization.

We proceed to formally define our quantization $Q_{(\epsilon,\theta)}$. $Q_{(\epsilon,\theta)}$ is parameterized by a phase shift $\theta \in [-\frac{1}{2},\frac{1}{2}]^d$ and an granularity parameter $\epsilon > 0$. For brevity, we omit the explicit dependence on phase and granularity when it is clear from context. For a given $(\epsilon,\theta)$:

$$ a(x) = \mathbf{argmin}_{{j} \in \mathbf{Z}^d} \{||x - \epsilon(\theta + {j})||_2\}$$
$$ Q(x) = \epsilon(\theta + a(x)) $$ 

We set $\{\theta\}_0^t$ with an iid random sequence such that $\theta_\tau \sim \mathbf{Unif}[-\frac{1}{2},\frac{1}{2}]^d$.

\subsection{Divide-and-Conquer $k$-Means}

We present pseudo-code for the deletion operation of divide-and-conquer $k$-means. The pseudo-code for the training algorithm may be found in the main body. The deletion update is conceptually simple. Since a deleted datapoint only belong to one leaf's dataset, we only need recompute the sub-problems on the path from said leaf to the root. 

\begin{algorithm}[h!]
   \caption{Deletion Op for DC-$k$-means}
   \label{delopdck}
\begin{algorithmic}
\footnotesize
  \STATE {\bfseries Input:} data matrix $D \in \mathbf{R}^{n \times d}$,  target deletion index $i$, model metadata $M$
  \STATE Obtain target deletion point $p \gets D_i$
  \STATE $\textsf{node} \gets$  leaf node assignment of $p$
  \STATE $\textsf{node}.\textsf{dataset} \gets \textsf{node}.\textsf{dataset}\setminus p$

       \WHILE{$\textsf{node}$ is not root}

    \STATE $ \textsf{node.parent}.\textsf{data} \gets \textsf{node.parent}.\textsf{data} \setminus \textsf{node}.\textsf{centroids}$
    \STATE $\textsf{node}.\textsf{centroids} \gets \textsf{k-means++}(\textsf{node}.\textsf{data},k,T)$
    \STATE $\textsf{node.parent}.\textsf{dataset}.\textsf{add}(\textsf{node}.\textsf{centroids})$
    \STATE $\textsf{node} \gets \textsf{node.parent}$
    \ENDWHILE
\STATE $\textsf{node}.\textsf{centroids} \gets \textsf{k-means++}(\textsf{node}.\textsf{dataset},k,T)$
  \STATE Update $D \gets D_{-i}$
  
  \textbf{return} \textsf{node.centroids}

\end{algorithmic}
\end{algorithm}

\subsection{Initialization for $k$-Means}
For both of our algorithms, we make use of the $k$-means++ initialization scheme \cite{arthur2007k}. This initialization is commonplace, and is the standard initialization in many scientific libraries, such as Sklearn \cite{scikit-learn}. In order to provide a more self-contained presentation, we provide some pseudo-code for the $k$-means++ initialization.

\begin{algorithm}[h!]
   \caption{Initialization by $k$-means++}
   \label{init++}
\begin{algorithmic}
\footnotesize
  \STATE {\bfseries Input:} data matrix $D \in \mathbf{R}^{n \times d}$,  number of clusters $k$
  \STATE $i \gets \mathbf{Uni}\{1,...,n\}$
    \STATE $I \gets \{D_{i}\}$.
    \STATE $\mathbf{u} \gets 0^n \in \mathbf{R}^n$.
       \FOR{$1 < l \leq k$}
        \STATE   $ u_j = \min_{\eta \in I}||\eta - D_j||^2$ for all $1 \leq j \leq n$ 
        \STATE $Z = \sum_{j=1}^{n}u_j$
        \STATE Sample $i \sim \frac{1}{Z}\mathbf{u}$
        \STATE $I \gets I \cup \{D_i\}$
    \ENDFOR
  
  \textbf{return} $I$
\end{algorithmic}
\end{algorithm}

\section{Mathematical Details}

Here, we provide proofs for the claims in the main body. We follow notation introduced in the main body and Appendix A. As a notational shorthand, we will let denote $A(D_{-i})$ by $A_{-i}$ and $R(D,A(D),D_{-i})$ as $R$ when there is only one dataset in the context. Also, when it is unambiguous, we will use $A$ to denote the \emph{specific} learning algorithm in question, and $R$ to denote its corresponding deletion operation. 

\subsection{Proof of Theorem 4.1}

 Refer to the main body for the statement of Theorem 4.1. Here is an abridged version:
 
\begin{theorem*}
Q-$k$-means supports $m$ deletions in expected time $O(m^2d^{5/2}/\epsilon)$.
\end{theorem*}

 Note that we assume the dataset is scaled onto the unit hypercube. Otherwise, the theorem still holds with an assumed constant factor radial bound. We prove the theorem in three successive steps, given by Lemma C.1 through Lemma C.3.

\begin{lemma}
Define $C = [-\frac{\epsilon}{2},\frac{\epsilon}{2}]^d$ for some $\epsilon > 0$. $C$ is the hypercube in Euclidean $d$-space of side length $\epsilon$ centered at the origin. Let $C' = [\frac{-\epsilon + \epsilon'}{2},\frac{\epsilon - \epsilon'}{2}]^d$ for some $\epsilon' < \epsilon$. Let $X$ be a uniform random variable with support $C$. Then, $\mathbf{Pr}[X \in C/C'] \leq \frac{2d\epsilon'}{\epsilon}$. 
\end{lemma}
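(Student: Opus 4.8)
The plan is to turn the probability of landing in the ``shell'' $C\setminus C'$ into a ratio of volumes and then bound that ratio with an elementary inequality. First I would note that $C'$ is a concentric sub-hypercube of $C$ of side length $\epsilon-\epsilon'$, which is positive precisely because $\epsilon' < \epsilon$; hence $C' \subseteq C$ and $\mathbf{Pr}[X \in C\setminus C'] = 1 - \mathbf{Pr}[X \in C']$. Since $X$ is uniform on $C$, the remaining probability is simply the Lebesgue-volume ratio,
$$\mathbf{Pr}[X \in C'] = \frac{\mathrm{Vol}(C')}{\mathrm{Vol}(C)} = \frac{(\epsilon-\epsilon')^d}{\epsilon^d} = \left(1 - \frac{\epsilon'}{\epsilon}\right)^d.$$

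Next I would set $t = \epsilon'/\epsilon \in (0,1)$ and invoke the Bernoulli inequality $(1-t)^d \geq 1 - dt$, valid for $t \in [0,1]$ and integer $d \geq 1$. This gives
$$\mathbf{Pr}[X \in C\setminus C'] = 1 - (1-t)^d \leq dt = \frac{d\epsilon'}{\epsilon} \leq \frac{2d\epsilon'}{\epsilon},$$
which is the claimed bound (in fact with a factor of two to spare).

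This is essentially a one-line volume computation followed by a standard inequality, so I do not anticipate a genuine obstacle. The only points needing a moment of care are confirming that the smaller cube really is contained in the larger one and that $t$ lies in $[0,1]$, both of which are guaranteed by the hypothesis $\epsilon' < \epsilon$. The factor of two in the target is slack; if one prefers to avoid Bernoulli entirely, the same conclusion follows from a union bound, writing the shell as $\bigcup_{j=1}^d \{\,|X_j| > (\epsilon-\epsilon')/2\,\}$ over its $2d$ boundary slabs and bounding each slab's mass, which reproduces the factor $2d$ directly.
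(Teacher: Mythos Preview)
Your proposal is correct. Your primary route---computing the exact volume ratio $(1-\epsilon'/\epsilon)^d$ and then applying Bernoulli's inequality---differs slightly from the paper's, which is precisely the union-bound argument you sketch at the end: the paper notes that $X \in C\setminus C'$ forces some coordinate $X_i$ to lie in one of the two boundary strips of combined length $\epsilon'$, and then sums over the $d$ coordinates. Both arguments actually deliver the sharper bound $d\epsilon'/\epsilon$, so the factor $2$ in the statement is slack (the paper records the marginal as $2\epsilon'/\epsilon$, an apparent arithmetic slip that nonetheless leaves the stated bound $2d\epsilon'/\epsilon$ valid). Your volume-plus-Bernoulli route is arguably a bit cleaner here since it produces the probability exactly before bounding; the union-bound route the paper chose is the one that generalizes more readily when the exact volume is not so explicit.
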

\begin{proof}{\textit{(Lemma C.1)}}
If $X \in C/C'$, then there exists some $i \in \{1,...,d\}$ such that $X_i \in [-\frac{\epsilon}{2},\frac{-\epsilon + \epsilon'}{2}] \cup [\frac{\epsilon - \epsilon'}{2},\frac{\epsilon}{2} ]$. Marginally, $\mathbf{Pr}[X_i \in [-\frac{\epsilon}{2},\frac{-\epsilon + \epsilon'}{2}] \cup [\frac{\epsilon - \epsilon'}{2},\frac{\epsilon}{2} ]] = 2\epsilon'/\epsilon.$ Taking a union bound over the $d$ dimensions obtains the bound. 
\end{proof}
We make use of Lemma C.1 in proving the following lemma. First, recall the definition of our quantization scheme $Q$ from Section 3:
$$Q(x) = \epsilon(\theta + \mathbf{argmin}_{{j} \in \mathbf{Z}^d} \{||x - \epsilon(\theta + {j})||_2\}).$$ We take $\theta \sim \mathbf{Uni}[-\frac{1}{2},\frac{1}{2}]^d$, implying a distribution for $Q$.

\begin{lemma}
Let $Q$ be a uniform quantization $\epsilon$-lattice over $\mathbf{R}^d$ with uniform phase shift $\theta$.  Let $Q(\cdot)$ denote the quantization mapping over $\mathbf{R}^d$ and let $Q[\cdot]$ denote the quantization image for subsets of $\mathbf{R}^d$. Let $X \in \mathbf{R}^d$. Then $\mathbf{Pr}[\{Q(X)\} \neq Q[B_{\epsilon'}(X)]] < \frac{2d\epsilon'}{\epsilon}$, where $B_{\epsilon'}(X)$ is the $\epsilon'$-ball about $X$ under Euclidean norm.
\end{lemma}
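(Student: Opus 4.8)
The plan is to reduce the event that quantization behaves non-uniformly over the ball $B_{\epsilon'}(X)$ to the event that $X$ lands too close to a cell boundary of the shifted lattice, and then apply Lemma C.1 directly. The key observation is that the lattice partitions $\mathbf{R}^d$ into translated copies of the cube $C = [-\frac{\epsilon}{2},\frac{\epsilon}{2}]^d$ (centered at each lattice vertex $\epsilon(\theta+j)$), and the quantization map $Q$ is constant on the interior of each such cell. If the entire ball $B_{\epsilon'}(X)$ lies inside the single cell containing $X$, then $Q$ maps all of $B_{\epsilon'}(X)$ to the one vertex $Q(X)$, so $Q[B_{\epsilon'}(X)] = \{Q(X)\}$ and the bad event does not occur. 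Hence the bad event $\{Q(X)\} \neq Q[B_{\epsilon'}(X)]$ implies that $B_{\epsilon'}(X)$ is \emph{not} contained in $X$'s cell, i.e. $X$ is within Euclidean distance $\epsilon'$ of the cell boundary.

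First I would fix the realization of $X$ and work with the centered coordinate $Y$ defined as the offset of $X$ from the center of its own lattice cell; because $\theta$ is uniform on $[-\frac{1}{2},\frac{1}{2}]^d$ and the lattice spacing is $\epsilon$, this offset $Y$ is uniformly distributed on $C = [-\frac{\epsilon}{2},\frac{\epsilon}{2}]^d$. (This is the standard dithering fact: a fixed point viewed modulo a uniformly-phased lattice is uniform within a cell.) Next I would note that if $Y$ lies in the shrunken inner cube $C' = [\frac{-\epsilon+\epsilon'}{2},\frac{\epsilon-\epsilon'}{2}]^d$, then every coordinate of $X$ is at least $\epsilon'/2$ from the corresponding face; in fact one checks that $C'$ is precisely the set of cell-centered offsets whose $\epsilon'$-ball stays within the cell, so $Y \in C'$ forces $B_{\epsilon'}(X)$ into the single cell and rules out the bad event. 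Therefore the bad event is contained in $\{Y \in C \setminus C'\}$, and Lemma C.1 gives $\mathbf{Pr}[Y \in C\setminus C'] \leq \frac{2d\epsilon'}{\epsilon}$, yielding the claimed bound.

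The main obstacle I anticipate is the geometric bookkeeping in the containment step $C' \subseteq \{$offsets whose $\epsilon'$-ball stays in the cell$\}$: one must verify that requiring each coordinate of the offset to lie in $[\frac{-\epsilon+\epsilon'}{2},\frac{\epsilon-\epsilon'}{2}]$ genuinely guarantees the \emph{Euclidean} ball of radius $\epsilon'$ (not just the coordinatewise box of half-width $\epsilon'$) remains inside the cube of side $\epsilon$. This works because a coordinate margin of $\epsilon'/2$ on each side already exceeds the per-coordinate displacement $\epsilon'$ needed — here one should be slightly careful that the inner cube $C'$ as defined leaves a margin of exactly $\epsilon'/2$, so the correct sufficient condition is that $X$ stay $\epsilon'$ away from the boundary; I would reconcile the factor by using the coarser sufficient condition that $Y$ lie in the cube shrunk by $\epsilon'$ on each face and confirm this is implied by (or matches) the $C'$ of Lemma C.1, absorbing any constant into the stated $\frac{2d\epsilon'}{\epsilon}$ bound. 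A secondary subtlety is justifying that $Q$ is constant on a single open cell and that boundary ties (measure zero under the continuous phase $\theta$) can be ignored, which follows since $\theta$ is absolutely continuous so $X$ lands on a cell boundary with probability zero. Once the reduction to Lemma C.1 is made rigorous, the probability bound is immediate.
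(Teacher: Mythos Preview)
Your proposal is correct and follows essentially the same route as the paper: translate so that $Q(X)$ sits at the origin, observe that the offset of $X$ within its cell is uniform on $[-\tfrac{\epsilon}{2},\tfrac{\epsilon}{2}]^d$, note that the bad event forces this offset to lie in the boundary shell $C\setminus C'$, and invoke Lemma~C.1. The only difference is that you are more scrupulous than the paper about the factor-of-two discrepancy between the $\epsilon'/2$ coordinate margin afforded by $C'$ and the $\epsilon'$ margin actually required to contain a Euclidean $\epsilon'$-ball; the paper simply asserts the two events are ``precisely equivalent,'' whereas you correctly flag this and note it is absorbed into the constant of the bound.
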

\begin{proof}{\textit{(Lemma C.2)}}
Due to invariance of measure under translation, we may apply a coordinate transformation by translation $Q(X)$ to the origin of $\mathbf{R}^d$. Under this coordinate transform, $X \sim \mathbf{Uni}[-\frac{\epsilon}{2},\frac{\epsilon}{2}]^d$. Further, note that $\mathbf{Pr}[B_{\epsilon'}(X) \subset [-\frac{\epsilon}{2},\frac{\epsilon}{2}]^d]$ is precisely equivalent to $\mathbf{Pr}[X \in  [\frac{-\epsilon + \epsilon'}{2},\frac{\epsilon - \epsilon'}{2}]^d]$. Because $X$ is uniform, applying Lemma C.1 as an upper bound completes the proof.
\end{proof}

With Lemma C.2 in hand, we proceed to state and prove Lemma C.3.

\begin{lemma}
Let $D$ be an dataset on $[0,1]^d$ of size $n$. Let $\mathbf{\overline{c}}{(D)}$ be the centroids computed by Q-$k$-means with initialization $I$ and parameters $T$, $k$, $\epsilon$, and $\gamma$. Then, with  probability greater than $1 - \frac{2mT kd^{3/2}}{\epsilon \gamma n}$, it holds that $\mathbf{\overline{c}}{(D)} = \mathbf{\overline{c}}{(D_{-\Delta})}$ for any $\Delta \subset D$ with $|\Delta| \leq m$ and $\Delta \cap I = \emptyset$, where probability is with respect to the randomness in the quantization phase.
\end{lemma}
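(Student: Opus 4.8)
The plan is to reduce the global statement ``$\bar{c}(D) = \bar{c}(D_{-\Delta})$'' to a per-iteration, per-cluster statement about stability of the quantizer, and then control the latter with Lemma C.2. I would couple the two executions of Q-$k$-means so that they share the same $k$-means++ initialization $I$ and the same sequence of lattice phases $\theta_1,\dots,\theta_T$; the hypothesis $\Delta \cap I = \emptyset$ guarantees $I \subseteq D_{-\Delta}$, so $I$ is a legitimate seeding for both runs and only the presence of the points in $\Delta$ distinguishes them. The argument is then an induction on the iteration index $\tau$: let $E_\tau$ be the event that the \emph{quantized} centroids $\hat{c}^{\,1},\dots,\hat{c}^{\,\tau}$ agree across the two runs. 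On $E_{\tau-1}$ the partitions $\pi^\tau$ and the previous-iteration centroids coincide in both runs, so the only discrepancy at iteration $\tau$ is that each cluster's pre-quantization centroid is averaged over a set missing the deleted points.

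Next I would establish the deterministic perturbation bound. Fix a cluster $\kappa$ and let $m_\kappa = |\Delta \cap \pi^\tau_\kappa|$, so that $\sum_\kappa m_\kappa = |\Delta| \le m$. The $\gamma$-balance correction guarantees every centroid is averaged with an effective mass of at least $\gamma n/k$. Since each coordinate of every datapoint lies in $[0,1]$, a short computation shows that removing $m_\kappa$ points from an average of mass $N \ge \gamma n/k$ shifts each coordinate of the centroid by at most $m_\kappa/N \le m_\kappa k/(\gamma n)$; hence $\|c^\tau_\kappa(D) - c^\tau_\kappa(D_{-\Delta})\|_2 \le \epsilon'_\kappa$ with $\epsilon'_\kappa := m_\kappa k\sqrt{d}/(\gamma n)$, i.e. $c^\tau_\kappa(D_{-\Delta}) \in B_{\epsilon'_\kappa}\big(c^\tau_\kappa(D)\big)$.

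Combining the two ingredients gives the probability bound. Conditioned on $E_{\tau-1}$, the analog centroid $c^\tau_\kappa(D)$ is determined by $\theta_1,\dots,\theta_{\tau-1}$, while the fresh phase $\theta_\tau$ is independent of it; applying Lemma C.2 with radius $\epsilon'_\kappa$ therefore yields $\mathbf{Pr}[\hat{c}^\tau_\kappa(D) \neq \hat{c}^\tau_\kappa(D_{-\Delta}) \mid E_{\tau-1}] \le 2d\epsilon'_\kappa/\epsilon$. I would then union bound over the $k$ clusters, using the telescoping identity $\sum_\kappa m_\kappa \le m$ to collapse one factor of $k$: the probability that \emph{some} centroid destabilizes at iteration $\tau$ is at most $\frac{2kd^{3/2}}{\epsilon\gamma n}\sum_\kappa m_\kappa \le \frac{2mkd^{3/2}}{\epsilon\gamma n}$. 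A final union bound over $\tau = 1,\dots,T$ gives total failure probability at most $\frac{2mTkd^{3/2}}{\epsilon\gamma n}$, which is the claimed bound; on the complementary event $E_T$ holds and the centroids agree.

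The main obstacle is making the inductive coupling airtight rather than the perturbation estimate, which is routine. The perturbation bound presumes the two runs share the same partition at iteration $\tau$, and this holds only \emph{conditionally} on $E_{\tau-1}$; the clean part is that independence of $\theta_\tau$ from the past lets me invoke Lemma C.2 under this conditioning. Two secondary points need care: the $\gamma$-correction must be verified to preserve the $m_\kappa k\sqrt{d}/(\gamma n)$ bound even when a cluster crosses the imbalance threshold in one run but not the other (the effective mass stays $\ge \gamma n/k$, so only the constant is affected), and the early-termination test must be checked to fire at the same iteration in both runs on $E_{\tau-1}$. I would also note that the telescoping over clusters is precisely what produces a single factor of $k$; a guarantee holding \emph{simultaneously} for every admissible $\Delta$ would instead charge each cluster its worst case of $m$ deletions and cost an additional factor of $k$.
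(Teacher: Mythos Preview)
Your proposal follows the same scheme as the paper: couple the two runs on $D$ and $D_{-\Delta}$ through shared initialization $I$ and phases $\theta_1,\dots,\theta_T$, bound the per-cluster analog-centroid displacement deterministically using the $\gamma n/k$ mass floor, invoke Lemma~C.2 on the fresh phase, and union bound over iterations and clusters. One point of difference is worth recording: the paper does not telescope. It sets a single $\epsilon' = mk\sqrt{d}/(\gamma n)$ for every cluster (charging all $m$ deletions to each) and then union bounds over all $Tk$ pairs $(\tau,\kappa)$, which literally produces $\tfrac{2mTk^{2}d^{3/2}}{\epsilon\gamma n}$, one factor of $k$ worse than the lemma's stated bound. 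Your per-cluster counts $m_\kappa$ with $\sum_\kappa m_\kappa \le m$ are exactly what recovers the claimed single-$k$ bound, and your closing remark is on the mark: the telescoping is valid only for a fixed $\Delta$, whereas the paper's cruder uniform-$\epsilon'$ argument (at the cost of the extra $k$) is the one that would extend to all admissible $\Delta$ simultaneously, since the good event $\{Q(c_\kappa^\tau)\}=Q[B_{\epsilon'}(c_\kappa^\tau)]$ then depends only on the run on $D$. The paper's proof is also silent on the inductive conditioning you set up explicitly via the events $E_\tau$, and on the early-termination and $\gamma$-threshold-crossing corner cases you flag; those are loose ends in the paper as well, not defects specific to your argument.
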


\begin{proof}{\emph{(Lemma C.3)}}
 We analyze two instances of Q-$k$-means algorithm operating with the same initial centroids and the same sequence of iid quantization maps $\{Q_\tau \}_1^t$. One instance runs on input dataset $D$ and the other runs on input dataset $D_{-\Delta}$. This is the only difference between the two instances.
 
Let $\mathbf{c}_I^{(\tau,\kappa)}(\delta)$ denote the $\kappa$-th analog (i.e. non-quantized) centroid at the $\tau$-th iteration of Q-$k$-means on some input dataset $\delta$ with initialization $I$. By construction, for any datasets $\delta, \delta'$, we have that $\mathbf{\overline{c}}_I(\delta) = \mathbf{\overline{c}}_I(\delta')$ if $Q_\tau(\mathbf{c}_I^{(\tau,\kappa)}(\delta))$ = $Q_\tau(\mathbf{c}_I^{(\tau,\kappa)}(\delta'))$ for all $\tau \in \{1,...,t\}$ and all $\kappa \in \{1,...,k\}$.
 
Fix any particular $\tau$ and $\kappa$. We can bound $||\mathbf{c}_I^{(\tau,\kappa)}(D) - \mathbf{c}_I^{(\tau,\kappa)}(D_{-\Delta})||_2$ as follows. Note that $\mathbf{c}_I^{(\tau,\kappa)}(D) = \frac{1}{n}\sum_{i=1}^n D_i \mathbf{1}(D_i \in \pi_\kappa)$ where $\mathbf{1}(\cdot)$ denotes the indicator function. Furthermore, $\mathbf{c}_I^{(\tau,\kappa)}(D_{-\Delta}) = \frac{1}{n}\sum_{i=1}^n D_i \mathbf{1}(D_i \in \pi_\kappa) \mathbf{1}(D_i \notin \Delta)$. Assume that $|\pi_\kappa| \geq \gamma n/k$. Because $|\Delta| \leq m$ and $||D_i||_2 \leq \sqrt{d}$, these sums can differ by at most $\frac{m k\sqrt{d}}{ \gamma n}$. On the other hand, assume that $|\pi_\kappa| < \gamma n/k$. In this case, the $\gamma$-correction still ensures that the sums differ by at most $\frac{m k\sqrt{d}}{\gamma n}$. This bounds $||\mathbf{c}_I^{(\tau,\kappa)}(D) - \mathbf{c}_I^{(\tau,\kappa)}(D-\Delta)||_2 \leq \frac{m k \sqrt{d}}{\gamma n}$.

To complete the proof, apply Lemma C.2 setting $\epsilon'= \frac{m k \sqrt{d}}{\gamma n}$. Taking a union bound over $\tau \in \{1,...,t\}$ and $\kappa \in \{1,...,k\}$ yields the desired result.

\end{proof}

We are now ready to complete the theorem. We briefly sketch and summarize the argument before presenting the proof. Recall the deletion algorithm for Q-$k$-means (Appendix B). Using the runtime memo, we verify that the deletion of a point does not change what would have been the algorithm's output. If it would have changed the output, then we retrain the entire algorithm from scratch. Thus, we take a weighted average of the computation expense in these two scenarios. Recall that retraining from scratch takes time $O(nkTd)$ and verifying the memo at deletion time takes time $O(kTd)$. Finally, note that we must sequentially process a sequence of $m$ deletions, with a valid model output after each request. We are mainly interested in the scaling with respect to $m$, $\epsilon$ and $n$, treating other factors as non-asymptotic constants in our analysis. We now proceed with the proof.

\begin{theorem*}
Q-$k$-means supports $m$ deletions in expected time $O(m^2d^{5/2}/\epsilon)$.
\end{theorem*}

\begin{proof}{\emph{(Correctness)}}\\
In order for $R$ to be a valid deletion, we require that $R =_d A_{-i}$ for any $i$. In this setting, we identify models with the output centroids: $A(D) = \overline{c}_{I}(D)$. Consider the sources of randomness: the iid sequence of random phases and the $k$-means++ initializations. 

Let $I(\cdot)$ be a set-valued random function computing the $k$-means++ initializations over a given dataset. $E$ denote the event that $D_i \notin I(D)$. Then, from the construction of $k$-means++, we have that for all $j \neq i$, $\mathbf{Pr}[D_j \in  I(D_{-i})] = \mathbf{Pr}[D_j \in I(D)| E]$. Thus, $I(D)$ and $I(D_{-i})$ are equal in distribution conditioned on $i$ not being an initial centroid. Note that this is evident from the construction of $k$-means++ (see algorithm 7).

Let $\theta$ denote the iid sequence of random phases for $A$ and let $\theta_{-i}$ denote the iid sequence of random phases for $A_{-i}$.
Within event $E$, we define a set of events $E'(\hat{\theta})$, parameterized by $\hat{\theta}$, as the event that output centroids are stable under deletion conditioned on a given sequence of phases $\hat{\theta}$: 

$E = \{i \notin I(D)\}, E'(\hat{\theta}) = \{A|\{\theta = \hat{\theta}\} = A_{-i}|\{\theta_{-i} = \hat{\theta}\}\} \cap E $

By construction of $R$, we have $R = A\mathbf{1}(E'(\theta)) + A_{-i}\mathbf{1}(\overline{E'(\theta)})$ where event $E'(\theta)$ is verified given the training time memo. To conclude, let $S$ be any Borel set:

$\mathbf{Pr}[R \in S] = \mathbf{Pr}[E'(\theta)]\mathbf{Pr}[R \in S | E'(\theta)] + (1-\mathbf{Pr}[E'(\theta)])\mathbf{Pr}[R \in S | \overline{E'(\theta)}]$ by law of total probability.

$= \mathbf{Pr}[E'(\theta)]\mathbf{Pr}[A \in S | E'(\theta)] + (1-\mathbf{Pr}[E'(\theta)])\mathbf{Pr}[A_{-i} \in S]$ by construction of $R$

$= \mathbf{Pr}[E'(\theta)]\mathbf{Pr}[A_{-i} \in S |\theta_{-i} = \theta] + (1-\mathbf{Pr}[E'(\theta)])\mathbf{Pr}[A_{-i} \in S]$ by definition of $E'$

$= \mathbf{Pr}[E'(\theta)]\mathbf{Pr}[A_{-i} \in S] + (1-\mathbf{Pr}[E'(\theta)])\mathbf{Pr}[A_{-i} \in S] = \mathbf{Pr}[A_{-i} \in S]$ by $\theta =_d \theta_{-i}$.

\end{proof}

\begin{proof}{\emph{(Runtime)}}

Let $\mathcal{T}$ be the total runtime of $R$ after training $A$ once and then satisfying $m$ deletion requests with $R$. Let  $\Delta = \{i_1,i_2,...,i_m\}$ denote the deletion sequence, with each deletion sampled uniformly without replacement from $D$.

Let $\Psi$ be the event that the centroids are stable for all $m$ deletions. Using Theorem 3.1 to bound the event complement probability $\mathbf{Pr}(\overline{\Psi})$:

$\mathbf{E}\mathcal{T} \leq \mathbf{E}[\mathcal{T}|\Psi] + \mathbf{Pr}[\overline{\Psi}]\mathbf{E}[\mathcal{T}|\overline{\Psi}] = O(mkTd) + O(\epsilon^{-1} m^2T^2k^3d^{2.5}) = O(m^2d^{2.5}/\epsilon)$.

In $\Psi$ the centroids are stable, and verifying in $\Psi$ takes time $O(mktd)$ in total. In $\overline{\Psi}$ we coarsely upper bounded $\mathcal{T}$ by assuming we re-train to satisfy each deletion. 
\end{proof}

\subsection{Proofs of Corollaries and Propositions}

We present the proofs of the corollaries in the main body. We are primarily interested in the asymptotic effects of $n$, $m$, $\epsilon$, and  $w$. We treat other variables as constants. For the purposes of online analysis, we let $\epsilon = \Theta(n^{-\beta})$ for some $\beta \in (0,1)$ and $w = \Theta(n^\rho)$ for some $\rho \in (0,1)$

\subsubsection{Proof of Corollory 4.1.1}
We state the following Theorem of Arthur and Vassilvitskii concerning $k$-means++ initializations \cite{arthur2007k}:

\begin{theorem}{(Arthur and Vassilivitskii)}

Let $\mathcal{L}^*$ be the optimal loss for a $k$-means clustering problem instance. Then $k$-means++ achieves expected loss $\mathbf{E}\mathcal{L}^{++} \leq (8\ln k + 16)\mathcal{L}^*$

\end{theorem}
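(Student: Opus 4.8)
The plan is to reconstruct the Arthur--Vassilvitskii argument, since the statement is exactly their $k$-means++ guarantee. Write $\mathcal{L}(S)$ for the $k$-means potential (sum of squared distances from each point to its nearest chosen center) restricted to a subset $S$ of the data, and let $\mathcal{L}^*(S)$ be the corresponding potential of $S$ under a fixed optimal clustering $C^*$; the goal is $\mathbf{E}\mathcal{L} \le (8\ln k + 16)\mathcal{L}^*$. First I would establish two single-cluster lemmas. The first says that if a cluster $A$ of $C^*$ receives a single center drawn uniformly at random from $A$, then $\mathbf{E}[\mathcal{L}(A)] = 2\mathcal{L}^*(A)$; this follows by averaging the centroid (bias--variance) identity $\sum_{a\in A}\|a-z\|^2 = \sum_{a\in A}\|a-c(A)\|^2 + |A|\,\|z-c(A)\|^2$ over $z$ uniform in $A$, where $c(A)$ is the centroid of $A$.

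The second single-cluster lemma handles a center drawn by $D^2$-sampling: if $A$ is an optimal cluster and the new center is chosen from $A$ with probability proportional to its current squared distance $D(\cdot)^2$ to the already-chosen centers, then $\mathbf{E}[\mathcal{L}(A)] \le 8\mathcal{L}^*(A)$. I would prove this by writing the expectation as $\sum_{a_0\in A}\frac{D(a_0)^2}{\sum_{a\in A}D(a)^2}\sum_{a\in A}\min(D(a),\|a-a_0\|)^2$, bounding $D(a_0)^2 \le 2D(a)^2 + 2\|a-a_0\|^2$ via the triangle inequality together with $(x+y)^2\le 2x^2+2y^2$, and symmetrizing over the two roles of the summation indices so that the residual terms collapse into $\mathcal{L}^*(A)$ through the first lemma, yielding the constant $8$.

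The core of the argument is an inductive lemma bounding the potential after adding several centers at once. Partition the optimal clusters into those already "covered" (containing a chosen center) and $u$ "uncovered" ones, with point sets $X_c, X_u$; the claim is that after adding $t\le u$ further $D^2$-sampled centers the resulting expected potential is at most $(\mathcal{L}(X_c) + 8\mathcal{L}^*(X_u))(1+H_t) + \tfrac{u-t}{u}\mathcal{L}(X_u)$, where $H_t = 1 + \tfrac12 + \cdots + \tfrac1t$. I would prove this by induction on $t$, conditioning the first of the $t$ new centers on whether it lands in a covered cluster (then apply the inductive hypothesis with one fewer center added) or in a particular uncovered cluster (then apply the second single-cluster lemma to turn that cluster's contribution into $8\mathcal{L}^*$, followed by the inductive hypothesis with $u-1$ uncovered clusters). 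Finally, I would seed the induction with the first center, which covers one optimal cluster at expected cost $2\mathcal{L}^*$ on it by the first lemma, and apply the inductive lemma with $t=u=k-1$ (so the residual term vanishes), using $1+H_{k-1}\le 2+\ln k$ and $2 \le 8$ to collapse everything to $(8\ln k+16)\mathcal{L}^*$.

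I expect the inductive lemma to be the main obstacle: the casework on covered versus uncovered placement must be weighted by the correct $D^2$ probabilities, and the bookkeeping that produces the harmonic increment $H_t$ relies on a power-mean/convexity step that bounds a probability-weighted average of per-cluster potentials by its maximum and then reintroduces the averaging to recover the $1/t$ contribution. Getting the $\tfrac{u-t}{u}$ residual term to telescope correctly through the induction is the delicate part; by comparison, the two single-cluster lemmas are routine.
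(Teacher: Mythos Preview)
The paper does not actually prove this statement: it is quoted verbatim as a result of Arthur and Vassilvitskii and used as a black box in the proof of Corollary~4.1.1. There is therefore no ``paper's own proof'' to compare against. Your outline is the standard Arthur--Vassilvitskii argument (the two single-cluster lemmas, the harmonic-number induction over covered/uncovered clusters, and the final specialization $t=u=k-1$ with $1+H_{k-1}\le 2+\ln k$), and it is correct as sketched; the anticipated difficulty you flag in the inductive bookkeeping is real but manageable exactly as you describe.
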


We re-state corollary 4.1.1:

\begin{corollary*}
Let $\mathcal{L}$ be a random variable denoting the loss of Q-$k$-means on a particular problem instance of size $n$. Then $\mathbf{E}\mathcal{L} \leq (8 \ln k + 16)\mathcal{L}^{*} + \epsilon\sqrt{nd(8 \ln k + 16)\mathcal{L}^{*}} + \frac{1}{4}nd\epsilon^2$.
\end{corollary*}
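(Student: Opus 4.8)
The plan is to treat the centroids returned by Q-$k$-means as a quantized perturbation of $k$-means++-quality centroids and to charge the extra loss incurred by that perturbation term by term. First I would fix the unquantized centroids $c$ whose quantization $\hat{c} = Q(c)$ is returned, and record the two facts I need. On one hand, since Lloyd recomputation only decreases loss and the early-termination rule keeps the accepted loss below the loss of the initial $k$-means++ seeding, the unquantized centroids satisfy $\mathcal{L}_c \leq \mathcal{L}^{++}$ for whichever state is returned; combined with the $k$-means++ seeding guarantee $\mathbf{E}\mathcal{L}^{++} \leq (8\ln k + 16)\mathcal{L}^{*}$ of Arthur and Vassilvitskii \cite{arthur2007k}, this gives $\mathbf{E}\mathcal{L}_c \leq (8\ln k + 16)\mathcal{L}^{*}$. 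On the other hand, rounding to the nearest vertex of the $\epsilon$-lattice displaces each centroid by at most the half-diagonal of a lattice cell, so $\|\hat{c}_\kappa - c_\kappa\|_2 \leq \frac{\sqrt{d}\,\epsilon}{2}$ for every $\kappa$.

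Next I would bound the quantized loss by evaluating it at the \emph{unquantized} assignment. Writing $\kappa^*(i)$ for the cluster of $x_i$ under $c$, I have $\mathcal{L} = \sum_i \min_\kappa \|x_i - \hat{c}_\kappa\|_2^2 \leq \sum_i \|x_i - \hat{c}_{\kappa^*(i)}\|_2^2$. Setting $a_i = x_i - c_{\kappa^*(i)}$ and $b_i = c_{\kappa^*(i)} - \hat{c}_{\kappa^*(i)}$ and expanding the square gives $\mathcal{L} \leq \mathcal{L}_c + 2\sum_i \langle a_i, b_i\rangle + \sum_i \|b_i\|_2^2$, where $\sum_i \|a_i\|_2^2 = \mathcal{L}_c$. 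The quadratic term is at most $n \cdot \frac{d\epsilon^2}{4} = \frac{1}{4}nd\epsilon^2$ by the displacement bound, and Cauchy--Schwarz bounds the cross term by $2\sqrt{\mathcal{L}_c}\sqrt{\frac{nd\epsilon^2}{4}} = \epsilon\sqrt{nd\,\mathcal{L}_c}$. This produces the deterministic inequality $\mathcal{L} \leq \mathcal{L}_c + \epsilon\sqrt{nd\,\mathcal{L}_c} + \frac{1}{4}nd\epsilon^2$.

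Finally I would take expectations. Since $t \mapsto t + \epsilon\sqrt{nd\,t}$ is increasing in $t$ and, by Jensen, $\mathbf{E}\sqrt{\mathcal{L}_c} \leq \sqrt{\mathbf{E}\mathcal{L}_c}$, substituting the $k$-means++ bound $\mathbf{E}\mathcal{L}_c \leq (8\ln k + 16)\mathcal{L}^{*}$ into each occurrence of $\mathcal{L}_c$ yields exactly $(8\ln k + 16)\mathcal{L}^{*} + \epsilon\sqrt{nd(8\ln k + 16)\mathcal{L}^{*}} + \frac{1}{4}nd\epsilon^2$, as claimed.

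The step I expect to be the main obstacle is the first one: cleanly identifying the unquantized centroids to which the $k$-means++ guarantee applies and justifying that the actual output — which arises from an iterative sequence of quantizations, a $\gamma$-balance correction, and an early-termination test — has loss dominated by the single perturbation estimate above rather than being literally $Q$ of the seeds. I would handle this by invoking monotonicity of the accepted loss to guarantee $\mathcal{L}_c \leq \mathcal{L}^{++}$ for the returned centroids, after which the term-by-term perturbation estimate and Jensen's inequality are routine.
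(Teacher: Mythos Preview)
Your proposal is correct and follows essentially the same route as the paper: bound the Q-$k$-means loss by the loss of $k$-means++-quality centroids plus a quantization penalty, control the cross term (you use Cauchy--Schwarz on the vectors $a_i,b_i$; the paper does the equivalent coordinate-wise bound $\sum_{i,j}\delta_{ij}\le\sqrt{nd\,\mathcal{L}^{++}}$ phrased as a constrained optimization), take expectations via Jensen, and plug in the Arthur--Vassilvitskii guarantee. The paper is terser about the algorithmic point you flag as the obstacle---it simply takes $c$ to be the $k$-means++ seeds themselves and relies implicitly on the early-termination rule to ensure the returned loss is no worse than the quantized seeds' loss---so your more explicit handling of that step is, if anything, an improvement in rigor.
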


\begin{proof}
Let $c$ be the initialization produced $k$-means++. Let $\mathcal{L}^{++} = \sum_{i=1}^{n}|| c(i) - x_i ||_2^2$ where $c(i)$ is the centroid closest to the $i$-th datapoint. Let $|| c(i) - x_i ||_2^2 = \sum_{j=1}^d \delta_{ij}^2$, with $\delta_{ij}$ denoting the scalar distance between $x_i$ and $c(i)$ in the $j$-th dimension. Then we may upper bound $\mathcal{L} \leq \sum_{i=1}^{n} \sum_{j=1}^d (\delta_{ij}^2 +  \frac{1}{4}\epsilon^2 + \delta_{ij}\epsilon)$ by adding a worst-case $\frac{\epsilon}{2}$ quantization penalty in each dimension. This sum reduces to:

$\mathcal{L} \leq \sum_i^n \sum_j^d \delta_{ij}^2  + \sum_i^n \sum_j^d \frac{1}{4}\epsilon^2 + \sum_i^n \sum_j^d  \delta_{ij}\epsilon  = \mathcal{L}^{++} + \frac{1}{4}nd\epsilon^2 + \epsilon \sqrt{nd\mathcal{L}^{++}}$. The third term comes from the fact that $\sqrt{nd\mathcal{L}^{++}} \geq \sum_i^n \sum_j^d  \delta_{ij} \geq \sqrt{\mathcal{L}^{++}}$ if $\sum_i^n \sum_j^d  \delta_{ij}^2 = \mathcal{L}^{++}$ and $\delta_{ij} > 0$ (to see this, treat it as a constrained optimization over the $\delta_{ij}$). Thus:

$$ \mathbf{E}\mathcal{L} \leq \mathbf{E}\mathcal{L}^{++} +  \frac{1}{4}nd\epsilon^2 + \epsilon\sqrt{nd} \mathbf{E}\sqrt{\mathcal{L}^{++}} $$

Using Jensen's inequality \cite{cover2012elements} yields $\mathbf{E}\sqrt{\mathcal{L}} \leq \sqrt{\mathbf{E}\mathcal{L}}$:

$$ \mathbf{E}\mathcal{L} \leq \mathbf{E}\mathcal{L}^{++} +  \frac{1}{4}nd\epsilon^2 + \epsilon\sqrt{nd} \sqrt{\mathbf{E}\mathcal{L}^{++}} $$

To complete the proof, apply Theorem C.5:
$$\mathbf{E}\mathcal{L} \leq C\mathcal{L}^* +  \frac{1}{4}nd\epsilon^2 + \epsilon\sqrt{ndC\mathcal{L}^*}$$
where $C = 8 \ln k + 16$. 
\end{proof}

\subsubsection{Proof of Proposition 4.2}

\begin{proposition*}
Let $D$ be an dataset on $\mathbf{R}^d$ of size $n$. Fix parameters $T$ and $k$ for DC-$k$-means. Let $w = \Theta(n^\rho)$ and $\rho \in (0,1)$  Then, with a depth-1, $w$-ary divide-and-conquer tree, DC-$k$-means supports $m$ deletions in time $O(mn^{\mathbf{max}(\rho,1-\rho)}d)$ in expectation with probability over the randomness in dataset partitioning.
\end{proposition*}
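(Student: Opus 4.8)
The plan is to bound the cost of a single deletion and then multiply by $m$, exploiting the observation (foreshadowed in the discussion of robust deletion) that the runtime is governed entirely by the maximum number of points assigned to a single leaf --- a quantity fixed at training time by the random partition and independent of which indices are deleted. First I would account for the two $k$-means++ recomputations triggered by the deletion operation (Algorithm 6). Since $k$ and $T$ are treated as constants, one $k$-means++ run (seeding plus $T$ Lloyd iterations) on $s$ points in $\mathbf{R}^d$ costs $O(sd)$. A deletion removes the target point from its leaf and reruns $k$-means++ on that leaf's reduced dataset, then swaps the leaf's old $k$ centroids for its new ones in the root dataset and reruns $k$-means++ at the root. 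Because the root dataset always holds exactly $kw$ centroids ($k$ per leaf), the root recomputation costs $O(kwd) = O(wd)$, independent of the request, while the leaf recomputation costs $O(s_j d)$, where $s_j$ is the current size of the affected leaf.

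Next I would control $s_j$. Under the uniform multinomial partition, each leaf load is distributed as $\mathrm{Binomial}(n,1/w)$ with mean $n/w = \Theta(n^{1-\rho})$. A Chernoff bound gives $\mathbf{Pr}[s_j > 2n/w] \le \exp(-\Omega(n/w))$ for a single leaf, and since $n/w = \Theta(n^{1-\rho})$ dominates $\log w = \Theta(\log n)$ for any fixed $\rho \in (0,1)$ and large $n$, a union bound over the $w$ leaves shows that with high probability \emph{every} leaf satisfies $s_j = O(n/w) = O(n^{1-\rho})$. Crucially, deletions only shrink leaves, so the worst leaf never grows and this bound persists over the entire stream of $m$ requests, keeping the per-deletion leaf cost at $O(n^{1-\rho}d)$ throughout.

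I would then assemble the expected runtime via the law of total expectation. On the high-probability balanced event, each of the $m$ deletions costs $O((n^{1-\rho} + n^{\rho})d) = O(n^{\max(\rho,1-\rho)}d)$, giving total $O(m n^{\max(\rho,1-\rho)}d)$. On the complementary event I would bound a deletion crudely by a from-scratch recomputation of cost $O(nd)$, so this branch contributes at most $m \cdot nd \cdot \exp(-\Omega(n^{1-\rho}))$ to the expectation, which is $o(1)$ and absorbed into the main term. Combining the two branches yields the claimed $O(m n^{\max(\rho,1-\rho)}d)$ bound, with the expectation taken over the random partition only; the deletion indices may be arbitrary, since efficiency depends solely on the maximal load and not on which leaf is hit.

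For completeness I would note correctness: if the deletion index $i$ is chosen independently of the partition, then deleting a point from the multinomial assignment leaves the remaining $n-1$ points distributed exactly as a fresh uniform assignment into $w$ leaves, and the subsequent $k$-means++ calls use fresh randomness, so $R =_d A_{-i}$. The main obstacle is the concentration step: one must verify that the regime $w = \Theta(n^\rho)$ with $\rho < 1$ keeps the expected leaf load growing polynomially in $n$, fast enough to dominate the $\log w$ penalty in the union bound, so that the balanced event holds with overwhelming probability and the rare unbalanced event cannot inflate the expectation.
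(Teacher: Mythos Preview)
Your argument is correct, but it takes a genuinely different route from the paper. The paper does not bound the \emph{maximum} leaf load at all. Instead, it observes that under the average-case deletion model (Algorithm~3), a uniformly random deletion hits leaf $j$ with probability proportional to that leaf's current load; so the relevant quantity is the \emph{size-biased} leaf size $J$, with $\mathbf{Pr}[J = n\hat S_j] = \hat S_j$. The paper then computes $\mathbf{E}[J] = n\sum_j \mathbf{E}[\hat S_j^2]$ directly from the second moment of a Binomial$(n, n^{-\rho})$, obtaining $\mathbf{E}[J] = O(n^{1-\rho})$ without any concentration inequalities or tail splitting.

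Your Chernoff-plus-union-bound approach buys you something the paper's argument does not: because you control $\max_j s_j$ rather than the size-biased mean, your bound holds for \emph{arbitrary} deletion sequences, not just uniform ones---exactly the robustness to adversarial leaf selection alluded to in the discussion of robust deletion. The paper's second-moment calculation is more elementary and avoids the two-event decomposition, but it genuinely relies on the deletion index being uniform over the remaining data. One minor omission on your side: the paper's correctness argument points out a subtlety you do not mention, namely that if $w$ is set as a function of $n$ (e.g.\ $\lceil n^\rho\rceil$), then deleting a point could in principle change the intended number of leaves and break $R =_d A_{-i}$; the paper patches this by rounding $w$ to a power of~2. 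Your correctness sketch is otherwise fine.
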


\begin{proof}\emph{(Correctness)}
We require that $R(D,A(D),i) =_d A(D_{-i})$. Since each datapoint is assigned to a leaf independently, the removal of a datapoint does not change the distribution of the remaining datapoints to leaves. However, one must be careful when it comes to the number of leaves, which cannot change due to a deletion. This is problematic if the number of leaves is $\left \lceil{n^\rho}\right \rceil $ (or another similar quantity based on $n$). 

The simplest way to address this (without any impact on asymptotic rates) is to round the number of leaves to the nearest power of 2. This works because the intended number of leaves will only be off from $n^\rho$ by at most a factor of 2. In the rare event this rounding changes due to a deletion, we will have to default to retraining from scratch, but, asymptotically in the fractional power regime, this can only happen a constant number of times which does not affect an amortized or average-case time complexity analysis.
\end{proof}

We proceed to prove the runtime analysis.

\begin{proof}\emph{(Runtime)}

Let $\mathcal{T}$ be the total runtime of $R$ after training $A$ once and then satisfying $m$ deletion requests with $R$. Let  $\Delta = \{i_1,i_2,...,i_m\}$ denote the deletion sequence, with each deletion sampled uniformly without replacement from $D$.

Let $S$ be the uniform distribution over $n^{\rho}$ elements and let $\hat{S}$ be the empirical distribution of $n$ independent samples of $S$. The fraction of datapoints assigned to the $i$-th leaf is then modeled by $\hat{S}_i$. We treat $\hat{S}$ as probability vector. Let random variable $J = n\hat{S}_i$ with probability $\hat{S}_i$. Thus, $J$ models the distribution over sub-problem sizes for a randomly selected datapoint. Direct calculation yields the following upper bound on runtime:

$\mathcal{T} \leq m(O(kTdJ) + O(n^\rho k^2Td))$ where the first term is due to the total deletion time at the leaves, the second term is due to the total deletion time at the root, and the $m$ factor is due to the number of deletions.

Hence, we have $\mathbf{E}(\mathcal{T}) \leq O(mkTd)\mathbf{E}(J) + O(mn^\rho k^2Td)$, with $\mathbf{E}(J)$ representing the quantity of interest. Computing $\mathbf{E}(J)$ is simple using the second moments of the Binomial distribution , denoted by $\mathcal{B}$:

$$\mathbf{E}(J) = \mathbf{E}(\mathbf{E}(J|\hat{S})) = \sum_{i=1}^{n^\rho}n\mathbf{E}(\hat{S}_i^2) $$

Noting that $\hat{S}_i \sim \frac{1}{n}\mathcal{B}(n,n^{-\rho})$  and $\mathbf{E}((\mathcal{B}(n,p)^2) = n(n-1)p + np$ \cite{knoblauch2008closed} yields:

$$ = n^{\rho-1}\mathbf{E}((\mathcal{B}(n,n^{-\rho})^2) = O(n^{1-\rho})$$

This yields the final bound: $\mathbf{E}(\mathcal{T}) \leq O(n^{1-\rho}mkTd) + O(n^\rho mk^2Td) = O(m\mathbf{max}\{n^{1-\rho}, n^\rho \}d)$

\end{proof}

\subsubsection{Proof of Corollary 4.2.1}

\begin{corollary*}

With $\epsilon = \Theta(n^{-\beta})$ for $0 < \beta < 1$, Q-$k$-means algorithm is deletion efficient in expectation if $\alpha \leq \frac{1-\beta}{2}$

\end{corollary*}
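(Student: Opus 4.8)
The plan is to reduce the claim to a direct amortized-cost calculation built on top of Theorem 4.1. First I would recall that, by the definition of $\alpha$-deletion efficiency, it suffices to show that running Algorithm 3 with $m = \Theta(n^\alpha)$ deletions costs expected amortized time $O(n^{1-\alpha})$. The total expected runtime has exactly two contributions: the one-time training cost of $A(D)$, and the cumulative expected cost of the $m$ online deletion operations. I would then bound each separately and amortize by dividing the total by $m$.

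For the first contribution, training Q-$k$-means costs $O(nkTd)$, which is $O(n)$ once $k$, $T$, and $d$ are treated as constants. For the second, Theorem 4.1 directly supplies that the entire sequence of $m$ deletions takes expected time $O(m^2 d^{5/2}/\epsilon)$. Summing these and dividing by $m$ gives an expected amortized runtime of $O(n/m) + O(m\, d^{5/2}/\epsilon)$, which (absorbing the constant $d$ factor) is $O(n/m) + O(m/\epsilon)$.

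Next I would substitute the asymptotic parameterizations $m = \Theta(n^\alpha)$ and $\epsilon = \Theta(n^{-\beta})$. The first term becomes $O(n^{1-\alpha})$, which already matches the $\Omega(n/m) = \Omega(n^{1-\alpha})$ lower bound from Section 3 and therefore never violates the efficiency target. The second term becomes $O(n^{\alpha}\cdot n^{\beta}) = O(n^{\alpha+\beta})$. Hence the algorithm is $\alpha$-deletion efficient precisely when the deletion term does not dominate the training-amortization term, i.e. when $n^{\alpha+\beta} = O(n^{1-\alpha})$. This holds if and only if $\alpha + \beta \le 1 - \alpha$, equivalently $\alpha \le \tfrac{1-\beta}{2}$, which is exactly the claimed threshold.

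The computation itself is elementary, so there is no deep obstacle; the only points requiring care are organizational. The first is remembering to fold the one-time $O(n)$ training cost into the amortized analysis rather than accounting only for the deletion stream. The second, and the main subtlety in invoking Theorem 4.1 correctly, is that its $O(m^2 d^{5/2}/\epsilon)$ bound is the cost of the \emph{entire} sequence of $m$ deletions and not a per-deletion cost, so that amortizing introduces a single factor of $1/m$ and leaves the linear-in-$m$ deletion term $O(m/\epsilon)$ that produces the binding constraint.
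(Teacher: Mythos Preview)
Your proposal is correct and follows essentially the same approach as the paper: invoke Theorem~4.1 for the $O(m^2 d^{5/2}/\epsilon)$ deletion cost, add the $O(n)$ training cost, amortize by $m$, substitute $m=\Theta(n^\alpha)$ and $\epsilon=\Theta(n^{-\beta})$ to obtain $O(n^{1-\alpha}+n^{\alpha+\beta})$, and compare exponents. Your write-up is in fact more careful than the paper's about spelling out the amortization and the role of each term.
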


\begin{proof}

We are interested in the asymptotic scaling of $n$, $m$, and $\epsilon$, and treat other factors as cosntants. We begin with the expected deletion time from Theorem 4.1, given by $O(m^2d^{5/2}\epsilon^{-1})$. Recall we are using rates $\epsilon = \Theta(n^{-\beta})$ and $m = \Theta(n^{\alpha})$. Applying the rates, adding in the training time, and amortizing yields $O(n^{1-\alpha} + n^{\alpha+\beta})$. Thus, deletion efficiency follows if $1-\alpha > \alpha + \beta$. Rearranging terms completes the calculation.
\end{proof}

\subsubsection{Proof of Corollary 4.2.2}

\begin{corollary*}

With $w = \Theta(n^{\rho})$ and a depth-1 $w$-ary divide-and-conquer tree, DC-$k$-means is deletion efficient in expectation if $\alpha \leq 1- \textbf{max}(\rho,1-\rho)$
\end{corollary*}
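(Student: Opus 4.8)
The plan is to derive this corollary as a direct consequence of Proposition 4.2 together with the definition of $\alpha$-deletion efficiency from Appendix A. Proposition 4.2 already supplies the substantive part: the expected cost of servicing $m$ deletions on a depth-1, $w$-ary tree with $w = \Theta(n^\rho)$ is $O(m \max\{n^\rho, n^{1-\rho}\} d)$, where the expectation is taken over the random multinomial partition of the data into leaves. So the remaining work is purely bookkeeping: account for the one-time training cost, amortize over $m = \Theta(n^\alpha)$ requests, and compare against the target amortized runtime $O(n^{1-\alpha})$.

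First I would bound the training cost. For a depth-1 tree with $w = \Theta(n^\rho)$ leaves, the leaf sizes sum to $n$, so running $k$-means++ across all leaves costs $O(nkTd) = O(nd)$ (treating $k$ and $T$ as constants), and the single merge at the root, whose sub-problem has size $kw = \Theta(n^\rho)$, costs $O(n^\rho d)$. Hence training runs in $O(nd)$, consistent with the $\Omega(n)$ sequential lower bound. Adding this to the deletion cost from Proposition 4.2 gives total runtime $O(nd + m\max\{n^\rho, n^{1-\rho}\}d)$.

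Next I would amortize by dividing by $m$ and substituting $m = \Theta(n^\alpha)$, which yields an amortized runtime of $O(n^{1-\alpha} d + \max\{n^\rho, n^{1-\rho}\} d)$: the training term amortizes to exactly the target rate $n^{1-\alpha}$, while the deletion term contributes $\max\{n^\rho, n^{1-\rho}\}$ independently of $\alpha$. For $\alpha$-deletion efficiency we require the whole expression to be $O(n^{1-\alpha})$, so the only binding constraint is that the deletion term not dominate, i.e. $\max\{\rho, 1-\rho\} \leq 1-\alpha$. Rearranging gives $\alpha \leq 1 - \max\{\rho, 1-\rho\}$, which is exactly the claimed condition.

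Because the genuine probabilistic estimate (the expected sub-problem size via the second moment of the binomial) is already absorbed into Proposition 4.2, I do not expect any real obstacle here — what remains is a deterministic asymptotic calculation. The one point requiring mild care is confirming that the training contribution amortizes to precisely $n^{1-\alpha}$ rather than something larger, so that it never becomes the binding term; since training is $O(nd)$ and $m = \Theta(n^\alpha)$, this is immediate. I would also invoke the sequential-training assumption flagged in the main text, since that is what licenses charging the full $O(nd)$ training cost once against the amortized budget.
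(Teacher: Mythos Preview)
Your proposal is correct and follows essentially the same approach as the paper's own proof: invoke Proposition 4.2 for the per-deletion cost $O(n^{\max\{\rho,1-\rho\}})$, add the $O(n)$ training cost, amortize over $m=\Theta(n^\alpha)$, and read off the condition $\max\{\rho,1-\rho\}\le 1-\alpha$. Your version is simply more explicit about the decomposition of the training cost into leaf and root contributions, but the argument is the same.
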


\begin{proof}

We are interested in the asymptotic scaling of $n$, $m$, and $w$, and treat other factors as constants. Recall we are using rates $w = \Theta(n^{\rho})$ and $m = \Theta(n^{\alpha})$ By Proposition 4.2, the runtime of each deletion is upper bounded by $O(n^{\mathbf{max}(\rho,1-\rho)})$ and the training time is $O(n)$. Amortizing and comparing the terms yields the desired inequaltiy. Deletion efficiency follows if $\mathbf{max}\{\rho,1-\rho\} \leq 1 - \alpha$. Rearranging terms completes the calculation.
\end{proof}

\section{Implementation and Experimental Details}
We elaborate on implementation details, the experimental protocol used in the main body, and present some supplementary experiments that inform our understanding of the proposed techniques.
\subsection{Experimental Protocol}

 We run a $k$-means baseline (i.e. a $k$-means++ seeding followed by Lloyd's algorithm), Q-$k$-means, and DC-$k$-means on 6 datasets in the simulated online deletion setting. As a proxy for deletion efficiency, we report the wall-clock time of the program execution on a single-core of an Intel Xeon E5-2640v4 (2.4GHz) machine. We are careful to only clock the time used by the algorithm and pause the clock when executing test-bench infrastructure operations. We do not account for random OS-level interruptions such as context switches, but we are careful to allocate at most one job per core and we maintain high CPU-utilization throughout.
 
 For each of the three methods and six datasets, we run five replicates of each benchmark to obtain standard deviation estimates. To initialize the benchmark, each algorithm trains on the complete dataset, which is timed by wall-clock. We then evaluate the loss and clustering performance of the centroids (untimed). Then, each model must sequentially satisfy a sequence of 1,000 uniformly random (without replacement) deletion requests. The time it takes to satisfy each request is also timed and added to the training time to compute a total computation time. The total computation time of the benchmark is then amortized by dividing by 1,000 (the number of deletion requests). This produces a final amortized wall-clock time. For the $k$-means baseline, we satisfy deletion via naive re-training. For Q-$k$-means and DC-$k$-means we use the respective deletion operations. As part of our benchmark, we also evaluate the statistical performance of each method after deletions 1,10,100, and 1,000. Since we are deleting less than 10\% of any of our datasets, the statistical performance metrics do not change significantly throughout the benchmark and neither do the training times (when done from scratch). However, a deletion operation running in time significantly less than it takes to train from scratch should greatly reduce the total runtime of the benchmark. Ideally, this can be achieved without sacrificing too much cluster quality, as we show in our results (Section 5).

\subsubsection{Implementation Framework}

We are interested in \emph{fundamental} deletion efficiency, however, empirical runtimes will always be largely implementation specific. In order to minimize the implementation dependence of our results, we control by implementing an in-house version of Lloyd's iterations which is used as the primary optimization sub-routine in all three methods. Our solver is based on the Numpy Python library \cite{van2011numpy}. Thus, Q-$k$-means and DC-$k$-means use the same sub-routine for computing partitions and centroids as does the $k$-means baseline. Our implementation for all three algorithms can be found at \url{https://github.com/tginart/deletion-efficient-kmeans}.

\subsubsection{Heuristic Parameter Selection}
Hyperparameter tuning poses an issue for deletion efficiency. In order to be compliant to the strictest notions of deletion, we propose the following \emph{heuristics} to select the quantization granularity parameter $\epsilon$ and the number of leaves $w$ for Q-$k$-means and DC-$k$-means, respectively. Recall that we always set iterations to 10 for both methods.

\emph{Heurstic Parameter Selection for Q-$k$-means}. Granularity $\epsilon$ tunes the centroid stability versus the quantization noise. Intuitively, when the number of datapoints in a cluster is high compared to the dimension, we need lower quantization noise to stabilize the centroids. A good rule-of-thumb is to use $ \epsilon = 2^{\lfloor- \log_{10} ( \frac{n}{kd^{3/2}}) - 3\rceil}$, which yields an integer power of 2. The heuristic can be conceptualized as capturing the effective cluster mass per dimension of a dataset. We use an exponent of $1.5$ for $d$, which scales like the stability probability (see Lemmas C.1 - C.3). The balance correction parameter $\gamma$ is always set to 0.2, which should work well for all but the most imbalanced of datasets.

\emph{Heurstic Parameter Selection for DC-$k$-means}. Tree width $w$ tunes the sub-problem size versus the number of sub-problems. Intuitively, it is usually better to have fewer larger sub-problems than many smaller ones. A good rule-of-thumb is to set $w$ to $n^{0.3}$, rounded to the nearest power of two.

\subsubsection{Clustering Performance Metrics}
We evaluate our cluster quality using the silhouette coefficient and normalized mutual information, as mentioned in the main body. To do this evaluation, we used the routines provided in the Scikit-Learn Python library \cite{pedregosa2011scikit}. Because computing the silhouette is expensive, for each instance we randomly sub-sample 10,000 datapoints to compute the score.

\subsubsection{Scaling}

We note that all datasets except MNIST undergo a minmax scaling in order to map them into the unit hypercube (MNIST is already a scaled greyscale image). In our main body, we treat this as a one-time scaling inherit to the dataset itself. In practice, the scaling of a dataset can change due to deletions. However, this is a minor concern (at least for minmax scaling) as only a small number of extremal datapoints affect the scale. Retraining from scratch when these points come up as a deletion request does not impact asymptotic runtime, and has a negligible impact on empirical runtime. Furthermore, we point out that scaling is not necessary for our methods to work. In fact, in datasets where the notion of distance remains coherent across dimensions, one should generally refrain from scaling. Our theory holds equally well in the case of non-scaled data, albeit with an additional constant scaling factor such as a radial bound.

\subsection{Datasets}

\begin{itemize} 
    \item \texttt{Celltypes}~\cite{han2018mapping} consists of $12,009$ single cell RNA sequences from a mixture of $4$ cell types: microglial cells, endothelial cells, fibroblasts, and mesenchymal stem cells. The data was retrieved from the Mouse Cell Atlas and consists of $10$ feature dimensions, reduced from an original $23,433$ dimensions using principal component analysis. Such dimensionality reduction procedures are a common practice in computational biology.
    \item \texttt{Postures}~\cite{gardner2014measuring,gardner20143d} consists of $74,975$ motion capture recordings of users performing $5$ different hand postures with unlabeled markers attached to a left-handed glove. 
    \item \texttt{Covtype}~\cite{blackard1999comparative}  consists of $15,120$ samples of $52$ cartographic variables such as elevation and hillshade shade at various times of day for $7$ forest cover types. 
    \item \texttt{Botnet}~\cite{meidan2018n}  contains statistics summarizing the traffic between different IP addresses for a commercial IoT device (Danmini Doorbell). We aim to distinguish between benign traffic data ($49,548$ instances) and $11$ classes of malicious traffic data from botnet attacks, for a total of $1,018,298$ instances.
   \item \texttt{MNIST}~\cite{lecun1998gradient}  consists of $60,000$ images of isolated, normalized, handwritten digits. The task is to classify each $28\times28$ image into one of the ten classes.
    \item \texttt{Gaussian} consists of $5$ clusters, each generated from $25$-variate Gaussian distribution centered at randomly chosen locations in the unit hypercube. $20,000$ samples are taken from each of the $5$ clusters, for a total of $100,000$ samples. Each Gaussian cluster is spherical with variance of $0.8$.
\end{itemize}

\subsection{Supplementary Experiments}

We include three supplementary experiments. Our first is specific to Q-$k$-means (See Appendix D.3.1), and involves the stability of the quantized centroids against the deletion stream. In our second experiment we explore how the choices of key parameters ($\epsilon$ and $w$) in our proposed algorithms contribute to the statistical performance of the clustering. In our third experiment, we explore how said choices contribute to the deletion efficiency in the online setting.
\subsubsection{Re-training During Deletion Stream for Q-$k$-means}

In this experiment, we explore the stability of the quantized centroids throughout the deletion stream. This is important to understand since it is a fundamental behavior of the Q-$k$-means, and is not an implementation or hardware specific as a quantity like wall-clock time is. We plot, as a function of deletion request, the average number of times that Q-$k$-means was forced to re-train from scratch to satsify a deletion request.

\begin{figure}[h!]
\centering
\small
\includegraphics[width=\textwidth]{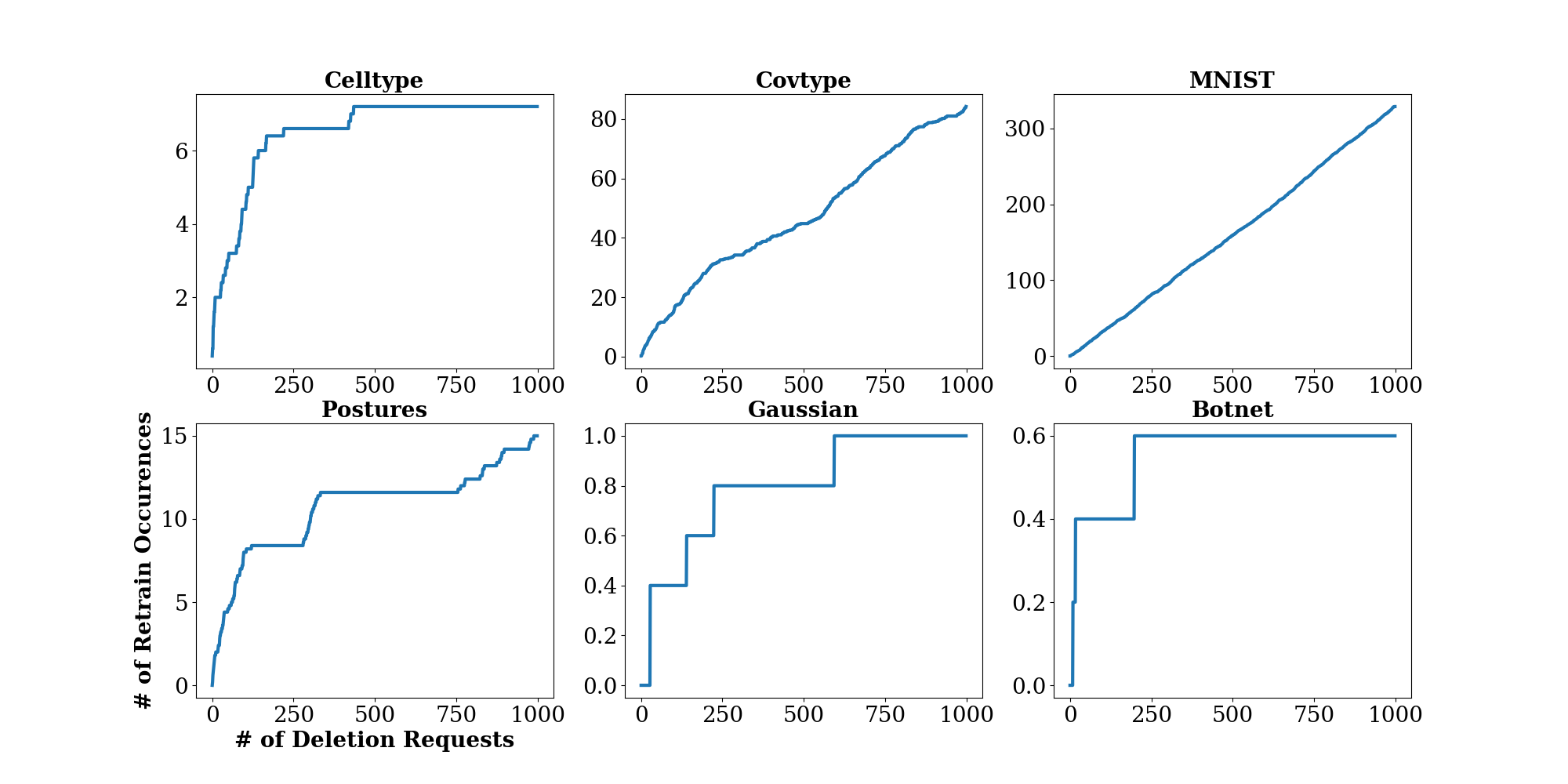}
\caption{Average retrain occurrences during deletion stream for Q-$k$-means}
\end{figure}

As we can see in Fig. 2, when the effective dimensionality is higher (relative to sample size), like in the case of \texttt{MNIST}, our retraining looks like somewhat of a constant slope across the deletions, indicating that the quantization is unable to stabilize the centroids for an extended number of deletion requests.

\subsubsection{Effects of Quantization Granularity and Tree Width on Optimization Loss}

Although the viability of hyperparameter tuning in the context of deletion efficient learning is dubious, from a pedagogical point of view, it is still interesting to sweep the main parameters (quantization granularity $\epsilon$ and tree width $w$) for the two proposed methods.
In this experiment, we compare the $k$-means optimization loss for a range of $\epsilon$ and $w$. As in the main body, we normalize the $k$-means objective loss to the baseline and restrict ourselves to depth-1 trees.

\begin{figure}[h!]
\centering
  \centering
  \includegraphics[width=1.05\textwidth]{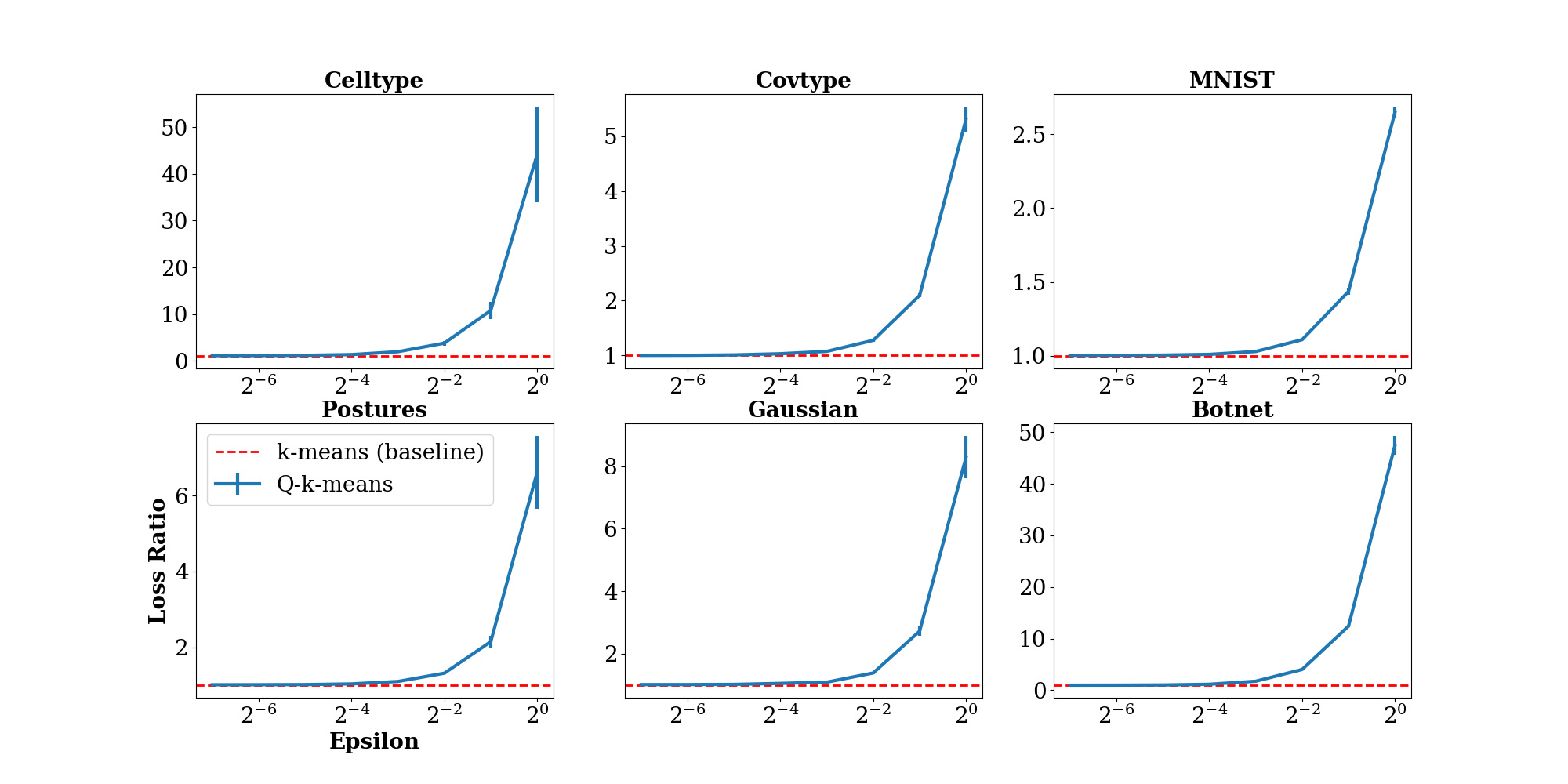}
    \caption{Loss Ratio vs. $\epsilon$ for Q-$k$-means on 6 datasets}
  \label{fig:eps_vs_acc}
\end{figure}

\begin{figure}[h!]
\centering
  \centering
  \includegraphics[width=1.05\textwidth]{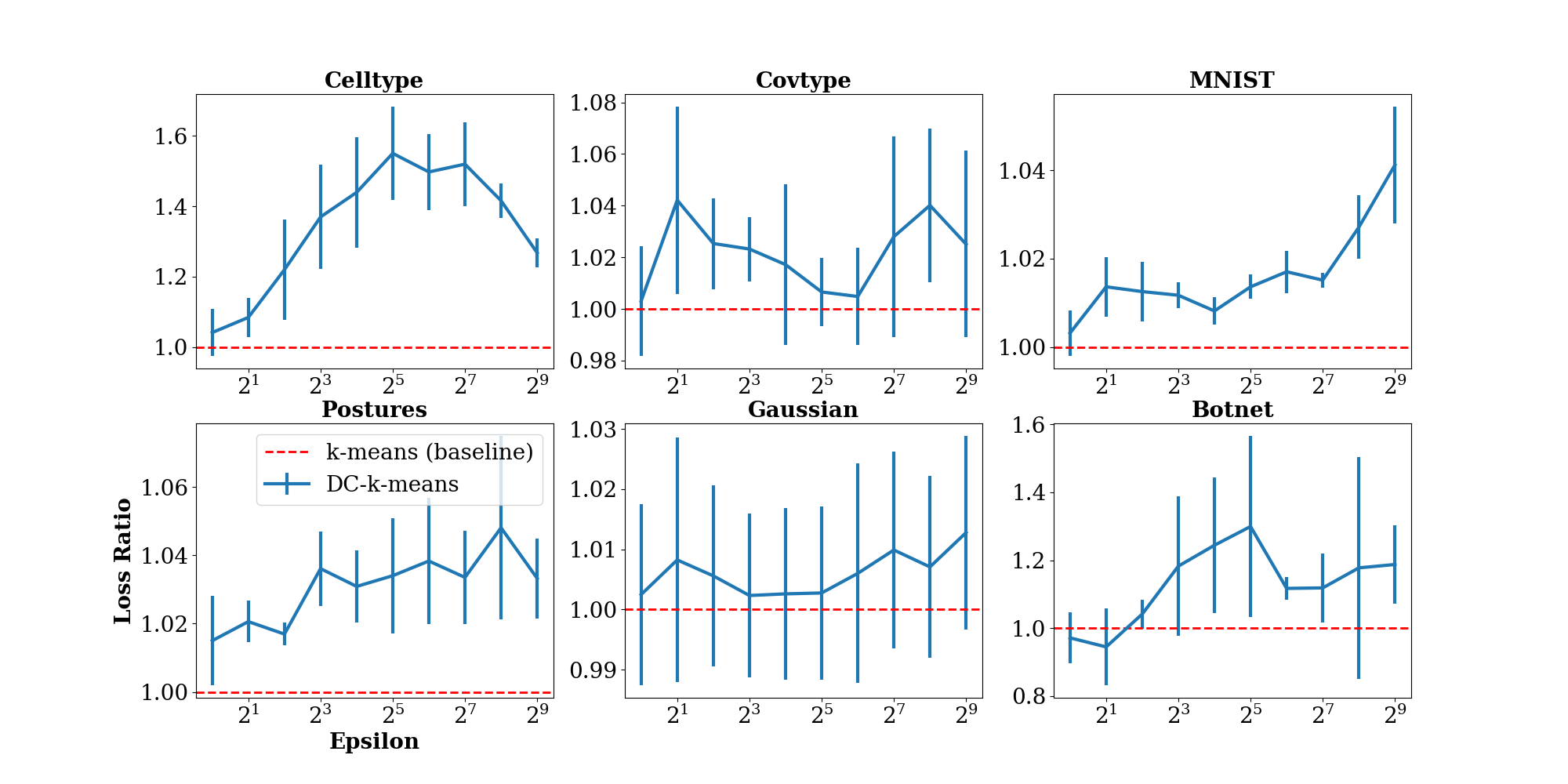}
    \caption{Loss Ratio vs. $w$ for DC-$k$-means on 6 datasets}
  \label{fig:w_vs_acc}
\end{figure}
In Fig. 3, Q-$k$-means performance rapidly deteriorates as $\epsilon \rightarrow 1$. This is fairly expected given our theoretical analysis, and is also consistent across the six datasets.

On the other hand, in Fig. 4, we see that the relationship between $w$ and loss is far weaker. The general trend among the datasets is that performance decreases as width increases, but this is not always monotonically the case. As was mentioned in the main body, it is difficult to analyze relationship between loss and $w$ theoretically, and, for some datasets, it seems variance amongst different random seeds can dominate the impact of $w$.

\subsubsection{Effects of Quantization Granularity and Tree Width on Deletion Efficiency}

On the \texttt{Covtype} dataset, we plot the amortized runtimes on the deletion benchmark for a sweep of $\epsilon$ and $w$ for both Q-$k$-means and DC-$k$-means, respectively. As expected, the runtimes for Q-$k$-means monotonically increase as $\epsilon \rightarrow 0$. The runtimes for DC-$k$-means are minimized by some an optimal tree width at approximately 32-64 leaves. 

\begin{figure}[h!]
\centering
\begin{minipage}{.42\textwidth}
  \centering
  \includegraphics[width=0.95\textwidth]{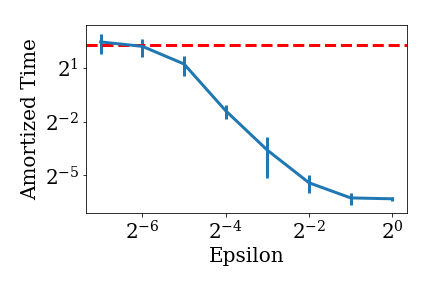}
    \caption{Amortized runtime (seconds) for Q-$k$-means as a function of quantization granularity on \texttt{Covtype}}
  \label{fig:eps_vs_rt}
\end{minipage}
\begin{minipage}{0.06\textwidth}
  \hspace{10pt}
\end{minipage}{}
\begin{minipage}{.42\textwidth}
  \centering
 \includegraphics[width=0.95\textwidth]{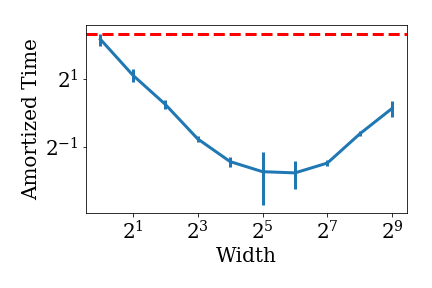}
    \caption{Amortized runtime (seconds) for DC-$k$-means as a function of tree width on \texttt{Covtype}}
  \label{fig:w_vs_rt}
\end{minipage}
\end{figure}

\section{Extended Discussion}

We include an extended discussion for relevant and interesting ideas that are unable to fit in the main body.

\subsection{Deletion Efficiency vs. Statistical Performance}

In relational databases, data is highly structured, making it easy to query and delete it. This is not the case for most machine learning models. Modern learning algorithms involve data processing that is highly complex, costly, and stochastic. This makes it difficult to efficiently quantify the effect of an individual datapoint on the entire model. Complex data processing may result in high-quality statistical learning performance, but results in models for which data deletion is inefficient, and, in the worst case, would require re-training from scratch. On the other hand, simple and structured data processing yields efficient data deletion operations (such as in relational databases) but may not boast as strong statistical performance. \emph{This is the central difficulty and trade-off engineers would face in designing deletion efficient learning systems.}

Hence, we are primarily concerned with \emph{deletion efficiency} and \emph{statistical performance} (i.e. the performance of the model in its intended learning task). In principle, these quantities can both be measured theoretically or empirically. We believe that the amortized runtime in the proposed online deletion setting is a natural and meaningful way to measure deletion efficiency. For deletion time, theoretical analysis involves finding the amortized complexity in a particular asymptotic deletion regime. In the empirical setting, we can simulate sequences of online deletion requests from real datasets and measure the amortized deletion time on wall-clocks. For statistical performance, theoretical analysis can be difficult but might often take the shape of a generalization bound or an approximation ratio. In the empirical setting, we can take the actual optimization loss or label accuracy of the model on a real dataset.

\subsection{Overparametrization, Dimensionality Reduction and Quantization}
One primary concern with quantization is that it performs poorly in the face of overparameterized models. In some situations, metric-preserving dimensionality reduction techniques \cite{johnson1984extensions,dasgupta2003elementary} could potentially be used.

\subsection{Hyperparameter Tuning}

Hyperparameter tuning is an essential part of many machine learning pipelines. From the perspective of deletion efficient learning, hyperparameter tuning presents somewhat of a conundrum. Ultimately, in scenarios in which hyperparameter tuning does indeed fall under the scope of deletion, one of the wisest solutions may be to tune on a subset of data that is unlikely to be deleted in the near future, or to pick hyperparameters via good heuristics that do not depend on specific datapoints.

\end{document}